\definecolor{mydarkblue}{rgb}{0,0.08,0.45}
\def\eqref#1{equation~\ref{#1}}
\def\1{\bm{1}}
\DeclareMathAlphabet{\mathsfit}{\encodingdefault}{\sfdefault}{m}{sl}
\SetMathAlphabet{\mathsfit}{bold}{\encodingdefault}{\sfdefault}{bx}{n}
\newtheorem{proposition}{Proposition}
\DeclareMathOperator{\tr}{Tr}
\DeclareMathOperator{\diag}{diag}
\title{Generative Restricted Kernel Machines: A Framework for Multi-view Generation and Disentangled Feature Learning}
\author{Arun Pandey, Joachim Schreurs, Johan A. K. Suykens \\
	Department of Electrical Engineering, ESAT-STADIUS,\\
	KU Leuven. Kasteelpark Arenberg 10, B-3001 Leuven, Belgium\\
	\texttt{\{arun.pandey,joachim.schreurs,johan.suykens\}@esat.kuleuven.be}
}
\begin{document}
\maketitle
\begin{abstract}
This paper introduces a novel framework for generative models based on Restricted Kernel Machines (RKMs) with joint multi-view generation and uncorrelated feature learning, called Gen-RKM. To enable joint multi-view generation, this mechanism uses a shared representation of data from various views. Furthermore, the model has a primal and dual formulation to incorporate both kernel-based and (deep convolutional) neural network based models within the same setting. When using neural networks as explicit feature-maps, a novel training procedure is proposed, which jointly learns the features and shared subspace representation. The latent variables are given by the eigen-decomposition of the kernel matrix, where the mutual orthogonality of eigenvectors represents the learned \emph{uncorrelated} features. Experiments demonstrate the potential of the framework through qualitative and quantitative evaluation of generated samples on various standard datasets.
\end{abstract}

\section{Introduction \label{sec:intro}}

In the past decade, interest in generative models has grown tremendously, finding applications in multiple fields such as, generated art, on-demand video, image denoising~\cite{vincent_stacked_nodate}, exploration in reinforcement learning~ \cite{florensa_automatic}, collaborative filtering~\cite{salakhutdinov_restricted}, in-painting~\cite{Yeh_2017_CVPR} and many more.
Some examples of generative models based on a probabilistic framework with latent variables are Variational Auto-Encoders \cite{kingma_auto-encoding_2013} and Restricted Boltzmann Machines (RBMs) \cite{Smolensky:1986, salakhutdinov_deep}.
More recently proposed models are based on adversarial training such as Generative Adversarial Networks (GANs)~\cite{goodfellow_generative_2014} and its many variants. Furthermore, auto-regressive models such as Pixel Recurrent Neural Networks (PixelRNNs) \cite{oord_pixel_2016} model the conditional distribution of every individual pixel given previous pixels. All these approaches have their own advantages and disadvantages. For example, RBMs perform both learning and Bayesian inference in graphical models with latent variables. However, such probabilistic models must be properly normalized, which requires evaluating intractable integrals over the space of all possible variable configurations~\cite{salakhutdinov_deep}. Currently GANs are considered as the state-of-the-art for generative modeling tasks, producing high-quality images but are more difficult to train due to unstable training dynamics, unless more sophisticated variants are applied.

Many datasets are composed of different representations of the data, also called views. Views can correspond to different modalities such as sounds, images, videos, sequences of previous frames, etc. Although each view could individually be used for learning tasks, exploiting information from all views together could improve the learning quality~\cite{pu_variational_2016,liu_coupled_2016,chen2017multi}. Furthermore, it is among the goals of the latent variable modelling to model the description of data in terms of \emph{uncorrelated} or \emph{independent} components. Some classical examples are Independent Component Analysis; Hidden Markov models~\cite{Rabiner86anintroduction}; Probabilistic Principal Component Analysis (PCA)~\cite{Tipping99probabilisticprincipal}; Gaussian-Process latent variable model~\cite{Lawrence:2005:PNP:1046920.1194904} and factor analysis. Hence, when learning a latent space in generative models, it becomes interesting to find a disentangled representation. Disentangled variables are generally considered to contain interpretable information and reflect distinct factors of variation in the data for e.g. lighting conditions, style, colors, etc. This makes disentangled representations especially interesting for the generation of plausible pseudo-data with certain desirable properties, e.g. generating new chair designs with a certain armrest or new cars with a predefined color.  The definition of disentanglement in the literature is not precise, however many believe that a representation with statistically independent variables is a good starting point~\cite{schmidhuber1992learning,ridgeway2016survey}. Such representations extract information into a compact form which makes it possible to generate samples with specific characteristics~\cite{chen2018isolating,bouchacourt2018multi,tran2017disentangled,chen2016infogan}. Additionally, these representations have been found to generalize better and be more robust against adversarial attacks~\cite{alemi2016deep}.

In this work, we propose a novel generative mechanism based on the framework of Restricted Kernel Machines (RKMs)~\cite{suykens_deep_2017}, called Generative-RKM (Gen-RKM). RKMs yield a representation of kernel methods with visible and hidden units establishing links between Kernel PCA, Least-Squares Support Vector Machines (LS-SVM)~\cite{suykens_least_2002} and RBMs. This framework has a similar energy form as RBMs, though there is a non-probabilistic training procedure where the eigenvalue decomposition plays the role of normalization. Recently, \cite{houthuys_tensor-based_nodate} used this framework to develop tensor-based multi-view classification models and \cite{joachim} showed how kernel PCA fits into this framework. \\

\noindent \textbf{Contributions: 1)} A novel joint multi-view generative model based on the RKM framework where multiple views of the data can be generated simultaneously. \textbf{2)} Two methods are discussed for computing the pre-image of the feature vectors: with the feature map explicitly known or unknown. We show that the mechanism is flexible to incorporate both kernel-based and (deep convolutional) neural network based models within the same setting. \textbf{3)} When using explicit feature maps, we propose a training algorithm that jointly performs the feature-selection and learns the common-subspace representation in the same procedure. \textbf{4)} Qualitative and quantitative experiments demonstrate that the model is capable of generating good quality images of natural objects. Further illustrations on multi-view datasets exhibit the potential of the model. Thanks to the orthogonality of eigenvectors of the kernel matrix, the learned latent variables are uncorrelated. This resembles a disentangled representation, which makes it possible to generate data with specific characteristics.

\section{Related Work}

Latent space models were studied in several other works, where multiple links with disentanglement are made. VAEs~\cite{kingma_auto-encoding_2013} have become a popular framework among different generative models as they provide more theoretically well-founded and stable training than GANs~\cite{goodfellow_generative_2014}. Learning a VAE
amounts to the optimization of an objective balancing the quality of samples that are autoencoded
through a stochastic encoder-decoder pair, measured by the reconstruction error,  while encouraging the latent space to follow a fixed prior distribution, often the Gaussian distribution. In $\beta$-VAEs~\cite{higgins2017beta}, an adjustable hyperparameter $\beta$ is introduced that balances quality of samples and latent space constraints with reconstruction accuracy. The choice of parameter $\beta = 1$ corresponds to the original VAE formulation. Further, they show that with  $\beta > 1$ (more emphasis on the latent variables to be Gaussian distributed) the model is capable of learning a more disentangled latent representation of the data.  In \cite{burgess2018understanding}, the effect of the $\beta$ term is analyzed more in depth. It was suggested that the stronger pressure for the posterior to match the factorised unit Gaussian prior puts extra constraints on the implicit capacity of the latent bottleneck~\cite{higgins2017beta}. Chen et al.~\cite{chen2018isolating} show a decomposition of the variational lower bound that can be used to explain the success of the $\beta$-VAE~\cite{higgins2017beta} in learning disentangled representations. The authors claim that the total correlation, which forces the model to find statistically independent factors in the data distribution, is the most important term in this decomposition. The role of disentanglement was also studied in GANs, where the InfoGAN~\cite{chen2016infogan} is one of the most known works.

The most common approach of joint multimodal/multiview learning with deep neural networks is to share the top of hidden layers in modality specific networks.  Srivastava and Salakhutdinov~\cite{srivastava2012multimodal} proposed a Deep Boltzmann Machine for learning multimodal data. The multimodal DBM learns a joint density model over the space of multimodal inputs by sharing the hidden units of the last layer. Examples of joint multimodal training for VAEs are~\cite{suzuki2016joint,wu2018multimodal}. The work of~\cite{suzuki2016joint} introduced the joint multi-modal VAE, which learns the common distribution using a joint inference network.  The authors use an ELBO objective with two additional divergence terms to minimize the distance between the uni-modal and the multi-modal importance distributions. The MVAE of~\cite{wu2018multimodal} uses a product of experts formulation and sub-sampled training paradigm to solve the multi-modal inference problem.

In contrast to classical VAE architectures, the proposed model introduces an orthogonal interconnection matrix $U$ motivated by the RKM formulation. The model thus finds an `optimal' linear subspace of the latent space given by the eigendecomposition. In this paper, we argue that this orthogonality leads to better disentanglement and generation quality.  \\

The paper is organized as follows. In Section \ref{sec:GRKM}, we discuss the Gen-RKM training and generation mechanism when multiple data sources are available. In Section \ref{sec:featuremaps}, we explain how the model incorporates both kernel methods and neural networks through the use of implicit and explicit feature maps respectively. In Section \ref{sec:exp}, we show experimental results of our model applied on various public datasets. Section \ref{sec:conc} concludes the paper along with directions towards the future work. Further discussions and derivations are given in the Appendix and the Python code is available at \href{https://www.esat.kuleuven.be/stadius/E/software.php}{https://www.esat.kuleuven.be/stadius/E/software.php}.
\section{Generative Restricted Kernel Machines framework \label{sec:GRKM}}
The proposed Gen-RKM framework consists of two phases: a training phase and a generation phase which occurs one after the other.
\subsection{Training phase of the RKM}
Similar to Energy-Based Models (EBMs, see \cite{lecun_learning_2004} for details), the RKM objective function captures  dependencies between variables by associating a scalar energy to each configuration of the variables. Learning consists of finding an energy function in which the observed configurations of the variables are given lower energies than unobserved ones. Note that the schematic representation of Gen-RKM model, as shown in Figure~\ref{fig:schematic_Gen_rkm} is similar to Discriminative RBMs \cite{larochelle_classification} and the objective function $\mathcal{J}_{t}$ (defined below) has an energy form similar to RBMs with additional regularization terms. The latent space dimension in the RKM setting has a similar interpretation as the number of hidden units in a Restricted Boltzmann Machine, where in the specific case of the RKM these hidden units are uncorrelated.

We assume a dataset $ \mathcal{D} = \{ \bm{x}_{i}, \bm{y}_{i} \}_{i=1}^{N} \text{~with~}\bm{x}_{i} \in \mathbb{R}^d $, $\bm{y}_{i} \in \mathbb{R}^p $ consisting of $N$ data points. Here $\bm{y}_{i}$ may represent an additional view of $\bm{x}_{i}$, e.g., an additional image from a different angle, the caption of an image or a class label. We start with the RKM interpretation of Kernel PCA, which gives an upper bound on the equality constrained Least-Squares Kernel PCA objective function~\cite{suykens_deep_2017}. Applying the feature-maps $\bm{\bm{\phi}}_{1}: \Omega_{x}\mapsto \mathcal{H}_{x}$ and $\bm{\bm{\phi}}_{2}:  \Omega_{y}\mapsto \mathcal{H}_{y}$ to the input data points, where $\mathcal{H}_{x},\mathcal{H}_{y}$ are the corresponding Reproducing Kernel Hilbert Spaces (RKHS) of the feature-maps respectively; the training objective function $\mathcal{J}_{t}$ for generative RKM is given by\footnote{For convenience, it is assumed that the feature vectors are centered in the feature space $\Omega_{x},\Omega_{y}$ using $\tilde{\bm{\bm{\phi}}}(\bm{x}) :=\bm{\bm{\phi}}(\bm{x})- \frac{1}{N}\sum_{i=1}^N \bm{\bm{\phi}}(\bm{x}_i)$. Otherwise, a centered kernel matrix could be obtained using \eqref{eq:centering} in \ref{sec:centering}.}:
\begin{equation}
	\label{eq:obj_train}
	\begin{aligned}
	\mathcal{J}_{t} &= \sum_{i=1}^{N} \left\{-\bm{\bm{\phi}}_{1}(\bm{x}_{i})^\top \bm{U}\bm{h}_i - \bm{\phi}_{2}(\bm{y}_{i})^\top \bm{V} \bm{h}_i + \frac{1}{2} \bm{h}_{i}^\top \bm{\Lambda} \bm{h}_i\right\}
	 + \frac{\eta_{1}}{2}\tr(\bm{U}^\top \bm{U}) + \frac{\eta_{2}}{2}\tr(\bm{V}^\top \bm{V}),
	\end{aligned}
\end{equation}
where $ \bm{U} \in \mathbb{R}^{d_{f} \times s} $, $ \bm{V} \in \mathbb{R}^{p_{f} \times s} $ are the unknown interconnection matrices, $\bm{\Lambda} \succ 0 $ the unknown diagonal matrix and $ \bm{h}_{i} \in \mathbb{R}^s $ are the latent variables modeling a common subspace $\mathcal{H} \subseteq \mathcal{H}_{x} \bigoplus \mathcal{H}_{y}$ between the two feature spaces. (see Figure~\ref{fig:schematic_Gen_rkm}). To obtain this objective from LS-SVM formulation see \ref{subsec: Gen-RKM objective function}.
Given $ \eta_{1} > 0 $ and $ \eta_{2} > 0 $ as regularization parameters, the stationary points of $\mathcal{J}_{t}$ are given by:
\begin{equation}\label{eq:2}
\begin{cases}
\frac{\partial \mathcal{J}_{t}}{\partial \bm{h}_{i}}=0  \implies & {\bm{\Lambda} \bm{h}_i =  \bm{U}^\top \bm{\phi}_{1}(\bm{x}_i) +\bm{V}^\top \bm{\phi}_{2}(\bm{y}_i)}, \enskip \forall i %
\\
\frac{\partial \mathcal{J}_{t}}{\partial \bm{U}}=0     \implies  & {\bm{U} = \frac{1}{\eta_{1}} \sum_{i=1}^{N}\bm{\phi}_{1}(\bm{x}_{i})\bm{h}_{i}^\top },                                    \\
\frac{\partial \mathcal{J}_{t}}{\partial \bm{V}} =0     \implies & {\bm{V} = \frac{1}{\eta_{2}} \sum_{i=1}^{N}\bm{\phi}_{2}(\bm{y}_{i})\bm{h}_{i}^\top }.
\end{cases}
\end{equation}

Substituting $\bm{U}$ and $\bm{V}$ in the first equation above, denoting the diagonal matrix $ \bm{\Lambda} =\diag\{\lambda_1,\ldots,\lambda_s\}\in\mathbb{R}^{s\times s} $ with $s \leq N$, yields the following eigenvalue problem:
\begin{equation}\label{eq:sup_KPCA}
{\left[ \frac{1}{\eta_{1}}\bm{K}_{1} + \frac{1}{\eta_{2}}\bm{K}_{2} \right] \bm{H}^\top=\bm{H}^\top \bm{\Lambda}},
\end{equation}

where $ \bm{H} =\big[\bm{h}_1 ,\dots, \bm{h}_N \big]\in \mathbb{R}^{s\times N} $ with $s\leq N$ is the number of selected principal components and $\bm{K}_{1},\bm{K}_{2} \in \mathbb{R}^{N \times N}$ are the kernel matrices corresponding to data sources\footnote{While in the above section we have assumed that only two data sources (namely $\Omega_{x}$ and $\Omega_{y}$) are available for learning, the above procedure could be extended to multiple data-sources. For the $M$ views or data-sources, this yields the training problem: ${\left[\sum_{\ell=1}^{M} \frac{1}{\eta_{\ell}}\bm{K}_{\ell} \right] \bm{H}^\top=\bm{H}^\top \bm{\Lambda}}.$}. Based on Mercer's theorem \cite{mercer_james_functions}, positive-definite kernel functions $k_{1}: \Omega_{x} \times \Omega_{x} \mapsto \mathbb{R}$, $k_{2}: \Omega_{y} \times \Omega_{y} \mapsto \mathbb{R}$  can be defined such that $k_{1}(\bm{x}_{i},\bm{x}_{j})=\langle \bm{\phi}_{1}(\bm{x}_{i}), \bm{\phi}_{1}(\bm{x}_{j})\rangle_{\mathcal{H}_{x}}$, and $k_{2}(\bm{y}_{i},\bm{y}_{j})=\langle \bm{\phi}_{2}(\bm{y}_{i}), \bm{\phi}_{2}(\bm{y}_{j})\rangle_{\mathcal{H}_{y}},~\forall i,j=1,\dots,N$ forms the elements of corresponding kernel matrices. The feature maps $\bm{\phi}_{1}$ and $\bm{\phi}_{2}$, mapping the input data to the high-dimensional feature space (possibly infinite) are implicitly defined by kernel functions. Typical examples of such kernels are given by the Gaussian RBF kernel $k(\bm{x}_i,\bm{x}_j) = e^{-\|\bm{x}_i-\bm{x}_j\|_2^2/(2\sigma^2)}$ or the Laplace kernel $k(\bm{x}_i,\bm{x}_j) =e^{-\|\bm{x}_i-\bm{x}_j\|_2/\sigma}$ just to name a few \cite{Scholkopf2001}. However, one can also define explicit feature maps, still preserving the positive-definiteness of the kernel function by construction \cite{suykens_least_2002}.  Equation \ref{eq:sup_KPCA} corresponds to a kernel PCA operation. In this spirit, we thus find an orthogonal interconnection matrix $U$ that is the optimal linear subspace of the latent space given by the eigendecomposition.
\begin{figure}[ht]
	\centering
	\includegraphics[width=0.7\textwidth]{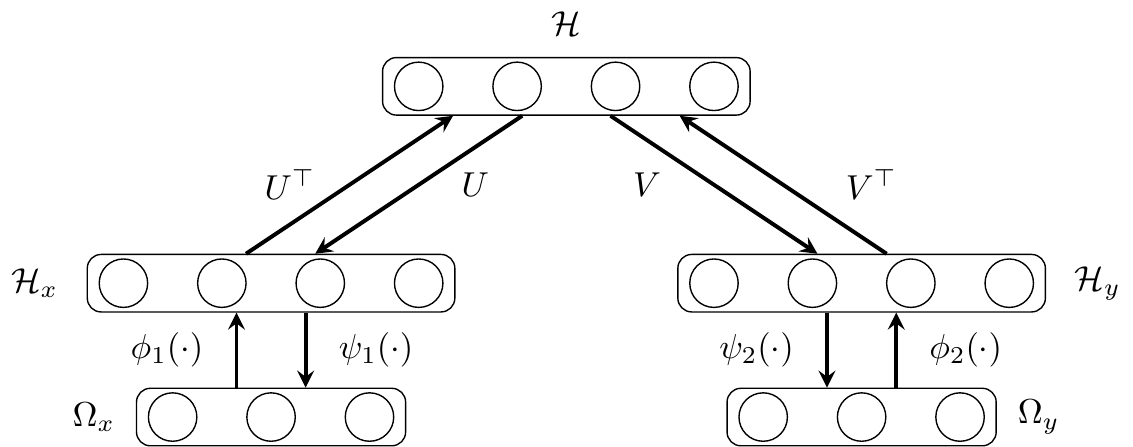}
	\caption{Gen-RKM schematic representation modeling a common subspace $\mathcal{H} \subseteq \mathcal{H}_{x} \bigoplus \mathcal{H}_{y}$ between two data sources $\Omega_{x}$ and $\Omega_{y}$. The $\bm{\phi}_1$, $\bm{\phi}_2$ are the feature maps ($\mathcal{H}_x$ and $\mathcal{H}_y$ represents the RKHS) corresponding to the two data sources. While $\bm{\psi}_1$, $\bm{\psi}_2$ represents the pre-image maps. The interconnection matrices $\bm{U},\bm{V}$ capture the dependencies between latent variables and the mapped data sources.}
	\label{fig:schematic_Gen_rkm}
\end{figure}
\subsection{Generation}
In this section, we derive the equations for the generative mechanism. RKMs resembling energy-based models, the inference consists in clamping the value of observed variables and finding configurations of the remaining variables that minimizes the energy  \cite{lecun_learning_2004}. Given the learned interconnection matrices $\bm U$ and $\bm V$, and a given latent variable $\bm{h}^{\star}$, consider the following generation objective function ${\mathcal{J}_{g}}$:
\begin{equation*}
\begin{aligned}
\mathcal{J}_{g} = - \hat{\bm{\phi}}_{1}(\bm{x}^{\star})^{\top}\bm{U}\bm{h}^{\star} - \hat{\bm{\phi}}_{2}(\bm{y}^{\star})^{\top}\bm{V} \bm{h}^{*} + \dfrac{1}{2}\hat{\bm{\phi}}_{1}(\bm{x}^{\star})^{\top}\hat{\bm{\phi}}_{1}(\bm{x}^{\star})
	 + \dfrac{1}{2}\hat{\bm{\phi}}_{2}(\bm{y}^{\star})^{\top}\hat{\bm{\phi}}_{2}(\bm{y}^{\star}),
\end{aligned}
\end{equation*}
with an additional regularization term on data sources. With slight abuse of notation, we denote the generated feature vectors by $\hat{\bm{\phi}}_{1}(\bm{x}^{\star})$ and $\hat{\bm{\phi}}_{2}(\bm{y}^{\star})$ given the corresponding latent variable $\bm{h}^{\star}$, to distinguish from the feature vectors corresponding to training data points (see \eqref{eq:obj_train}). The given latent variable $\bm{h}^{\star}$ can be the corresponding latent code of a training point, a newly sampled hidden unit or a specifically determined one. Above cases correspond to generating the reconstructed visible unit, generating a random new visible unit or exploring the latent space by carefully selecting hidden units respectively. The stationary points of $\mathcal{J}_{g}$ are characterized by:
\begin{equation}\label{eq:statio_pred}
\begin{cases}
\frac{\partial \mathcal{J}_{g}}{\partial \hat{\bm{\phi}}_{1}(\bm{x}^{\star})}=0     \implies  & \hat{\bm{\phi}}_{1}(\bm{x}^{\star}) = \bm{U}\bm{h}^{\star}, \\
\frac{\partial \mathcal{J}_{g}}{\partial \hat{\bm{\phi}}_{2}(\bm{y}^{\star})} =0     \implies & \hat{\bm{\phi}}_{2}(\bm{y}^{\star}) = \bm{V}\bm{h}^{\star}.
\end{cases}
\end{equation}

Using $\bm{U}$ and $\bm V$ from \eqref{eq:2}, we obtain the generated feature vectors:
\begin{equation}
\label{eq:impl_sol}
\begin{aligned}
\hat{\bm{\phi}}_{1}(\bm{x}^{\star}) = \left(\dfrac{1}{\eta_{1}} \sum_{i=1}^{N}{\bm{\phi}}_{1}(\bm{x}_i)\bm{h}_{i}^\top\right) \bm{h}^{\star},
\quad
\hat{\bm{\phi}}_{2}(\bm{y}^{\star}) = \left(\dfrac{1}{\eta_{2}} \sum_{i=1}^{N}{\bm{\phi}}_{2}(\bm{y}_i)\bm{h}_{i}^\top\right) \bm{h}^{\star}.
\end{aligned}
\end{equation}

To obtain the generated data, now one should compute the inverse images of the feature maps $\hat{\bm{\phi}}_{1}(\cdot)$ and $\hat{\bm{\phi}}_{2}(\cdot)$ in the respective input spaces, i.e., solve the \emph{pre-image problem}. We seek to find the functions $\bm{\psi}_1 \colon \mathcal{H} \mapsto \Omega_{x}$ and $\bm{\psi}_2 \colon \mathcal{H} \mapsto \Omega_{y}$ corresponding to the two data-sources, such that $(\bm{\psi}_1 \circ \hat{\bm{\phi}}_1) (\bm{x}^\star)\approx \bm{x}^\star$ and $(\bm{\psi}_2 \circ \hat{\bm{\phi}}_2) (\bm{y}^\star) \approx \bm{y}^\star$, where $\hat{\bm{\phi}}_1 (\bm{x}^\star)$ and $\hat{\bm{\phi}}_2 (\bm{y}^\star)$ are given using \eqref{eq:impl_sol}.

When using kernel methods, explicit feature maps are not necessarily known. Commonly used kernels such as the radial-basis function and polynomial kernels map the input data to a very high dimensional feature space. Hence finding the pre-image, in general, is known to be an ill-conditioned problem \cite{mika_kernel_nodate}. However, various approximation techniques have been proposed~\cite{bui_projection-free_2019,kwok_pre-image_2004-2,honeine_preimage_2011-1,weston_learning_2004} which could be used to obtain the approximate pre-image $\hat{\bm{x}}$ of $\hat{\bm{\phi}}_{1}(\bm{x}^{\star})$. In section~\ref{sec:implicit}, we employ one such technique to demonstrate the applicability in our model, and consequently generate the multi-view data. One could also define explicit pre-image maps. In section~\ref{sec: explicit}, we define parametric pre-image maps and learn the parameters by minimizing the appropriately defined objective function. The next section describes the above two pre-image methods for both cases, i.e., when the feature map is explicitly known or unknown, in greater detail.

\section{The Proposed Algorithm with Implicit \& Explicit Feature Maps \label{sec:featuremaps}}
\subsection{Implicit feature map}
\label{sec:implicit}
As noted in the previous section, since $ \bm{x}^{\star} $ may not exist, we find an approximation $ \hat{\bm{x}} $. A possible technique is shown by~\cite{joachim}. Left multiplying \eqref{eq:impl_sol} by $\hat{\bm{\phi}}_{1}(\bm{x}_{i})^{\top}$ and $\hat{\bm{\phi}}_{2}(\bm{y}_{i})^{\top}$, $\forall i=1,\dots, N$, we obtain:
\begin{equation}\label{eq:Kxy}
{{\bm k}_{\bm{x}^{\star}}=\frac{1}{\eta_{1}}{\bm{K}}_{1}\bm{H}^\top \bm{h}^{\star}}, \quad    {{\bm{k}}_{\bm{y}^{\star}}=\frac{1}{\eta_{2}}{\bm{K}}_{2}\bm{H}^\top \bm{h}^{\star}},
\end{equation}
where, ${\bm k}_{\bm{x}^{\star}}=\left[ k(\bm{x}_{1}, \bm{x}^{\star}),\dots,k(\bm{x}_{N}, \bm{x}^{\star}) \right]^{\top}$ represents the \emph{similarities} between $\hat{\bm{\phi}}_1 (\bm{x}^{\star})$ and training data points in the feature space, and $\bm{K}_{1}\in \mathbb{R}^{N\times N}$ represents the centered kernel matrix of $\Omega_{x}$. Similar conventions follow for $\Omega_{y}$ respectively. Using the \emph{kernel-smoother} method~\cite{hastie01statisticallearning}, the pre-images are given by:
\begin{equation}
\label{eq: pre-Ix,y}
\begin{aligned}
\hat{\bm{x}} = (\bm{\psi}_1 \circ \hat{\bm{\phi}}_1) (\bm{x}^\star) =\dfrac{\sum_{j=1}^{n_{r}} \tilde{k}_{1}(\bm{x}_{j}, \bm{x}^{\star})\bm{x}_j }{\sum_{j=1}^{n_{r}} \tilde{k}_{1}(\bm{x}_{j}, \bm{x}^{\star})}, \quad  \hat{\bm{y}}= (\bm{\psi}_2 \circ \hat{\bm{\phi}}_2) (\bm{y}^\star) = \dfrac{\sum_{j=1}^{n_{r}} \tilde{k}_{2}(\bm{y}_{j}, \bm{y}^{\star})\bm{y}_j }{\sum_{j=1}^{n_{r}} \tilde{k}_{2}(\bm{y}_{j}, \bm{y}^{\star})},
\end{aligned}
\end{equation}
where $\tilde{k}_1(\bm{x}_i,\bm{x}^{\star})$ and $\tilde{k}_2(\bm{y}_i,\bm{y}^{\star})$ are the scaled similarities (see \eqref{eq: pre-Ix,y}) between $0$ and $1$ and $n_{r}$ the number of closest points based on the similarity defined by kernels $\tilde{k}_1$ and $\tilde{k}_2$.

\subsection{Explicit Feature map}
\label{sec: explicit}

While using an explicit feature map, Mercer's theorem is still applicable due to the positive  semi-definiteness of the kernel function by construction, thereby allowing the derivation of  \eqref{eq:sup_KPCA}. In the experiments, we use a set of (convolutional) neural networks as the parametric feature maps $\bm{\phi}_{\bm{\theta}}(\cdot)$. Another (transposed convolutional) neural network is used for the pre-image map $\bm{\psi}_{\bm{\zeta}}(\cdot)$ \cite{dumoulin2016guide}. The network parameters $\{\bm{\theta},\bm{\zeta}\}$ are learned by minimizing the reconstruction errors $\mathcal{L}_1(\bm{x},\bm{\psi}_{1_{\bm{\zeta}_{1}}} (\hat{\bm{\phi}}_{1_{\bm{\theta}_{1}}} (\bm{x}))) =  \frac{1}{N}\sum_{i=1}^{N} \|\bm{x}_i - \bm{\psi}_{1_{\bm{\zeta}_{1}}} (\hat{\bm{\phi}}_{1_{\bm{\theta}_{1}}} (\bm{x}_i))\|^{2}_2$ for the first view and $\mathcal{L}_2(\bm{y},\bm{\psi}_{2_{\bm{\zeta}_{2}}} (\hat{\bm{\phi}}_{2_{\bm{\theta}_{2}}} (\bm{y}))) =  \frac{1}{N}\sum_{i=1}^{N} \|\bm{y}_i - \bm{\psi}_{2_{\bm{\zeta}_{2}}} (\hat{\bm{\phi}}_{2_{\bm{\theta}_{2}}} (\bm{y}_i))\|^{2}_2$ for the second view, however, in principle, one can use any other loss appropriate to the dataset.
Here $\hat{\bm{\phi}}_{1_{\bm{\theta}_{1}}} (\bm{x}_i)$ and $\hat{\bm{\phi}}_{2_{\bm{\theta}_{2}}} (\bm{y}_i)$ are given by \eqref{eq:impl_sol}, i.e., the generated points in feature space from the subspace $\mathcal{H}$.
Adding the loss function directly into the objective function $\mathcal{J}_t$ is not suitable for minimization. Instead, we use the stabilized objective function defined as $\mathcal{J}_{stab} = \mathcal{J}_{t} + \frac{c_{\mathrm{stab}}}{2} \mathcal{J}_{t}^2 $, where $c_{stab}\in \mathbb{R}^{+}$ is the regularization constant \cite{suykens_deep_2017}. This tends to push the objective function $\mathcal{J}_t$ towards zero, which is also the case when substituting the solutions $\lambda_i , \bm{h}_i$ back into $\mathcal{J}_t$ (see \ref{sec:stabilizationTerm} for details). The combined training objective is given by:
\begin{equation*}
\label{eq:combined_trainigEq}
\begin{aligned}
\min_{\bm{\theta}_1,\bm{\theta}_2,\bm{\zeta}_1,\bm{\zeta}_2} \mathcal{J}_{c} \enskip  = \mathcal{J}_{stab} + \frac{\gamma}{2 N}\bigg(\sum_{i=1}^N \big[ \mathcal{L}_1(\bm{x}_i,\bm{\psi}_{1_{\bm{\zeta}_{1}}} (\hat{\bm{\phi}}_{1_{\bm{\theta}_{1}}} (\bm{x}_i)))  +   \mathcal{L}_2(\bm{y}_i,\bm{\psi}_{2_{\bm{\zeta}_{2}}} (\hat{\bm{\phi}}_{2_{\bm{\theta}_{2}}} (\bm{y}_i))) \big] \bigg),
\end{aligned}
\end{equation*}
where $\gamma\in \mathbb{R}^+$ is a regularization constant to control the stability with reconstruction accuracy. In this way, we combine feature-selection and subspace learning within the same training procedure.

In the objective of the VAE~\cite{kingma_auto-encoding_2013}, an extra term in the form of the Kullback-Leibler divergence between the encoder's distribution and a unit Gaussian is added as a prior on the latent variables. This ensures the latent space is smooth and without discontinuities, which is essential for good generation. By interpreting kernel PCA within the LS-SVM setting~\cite{suykens_least_2002}, the PCA analysis can take the interpretation of a one-class modeling problem with zero target value around which one maximizes the variance~\cite{suykens2003support}. When choosing a good feature map, one expects the latent variables to be normally distributed around zero. As a result, the latent space of the Gen-RKM is continuous, allowing easy random sampling and interpolation (see Figure~\ref{fig:scatter_mnist}).  Kernel PCA gives uncorrelated components in feature space~\cite{bishop_2006}. While the standard PCA does not give a good disentangled representation for images~\cite{eastwood2018a,higgins2017beta}. By designing a good kernel (through appropriate feature-maps) and doing kernel PCA, it is possible to get a disentangled representation for images as we demonstrate in Figure~\ref{fig:uncorre}.

\begin{figure}[h]
	\centering
	\begin{tabular}{cc}
		\setlength\tabcolsep{0pt} %
		\includegraphics[trim={1cm 0.7cm 1.6cm 1cm},clip, height=3.8cm]{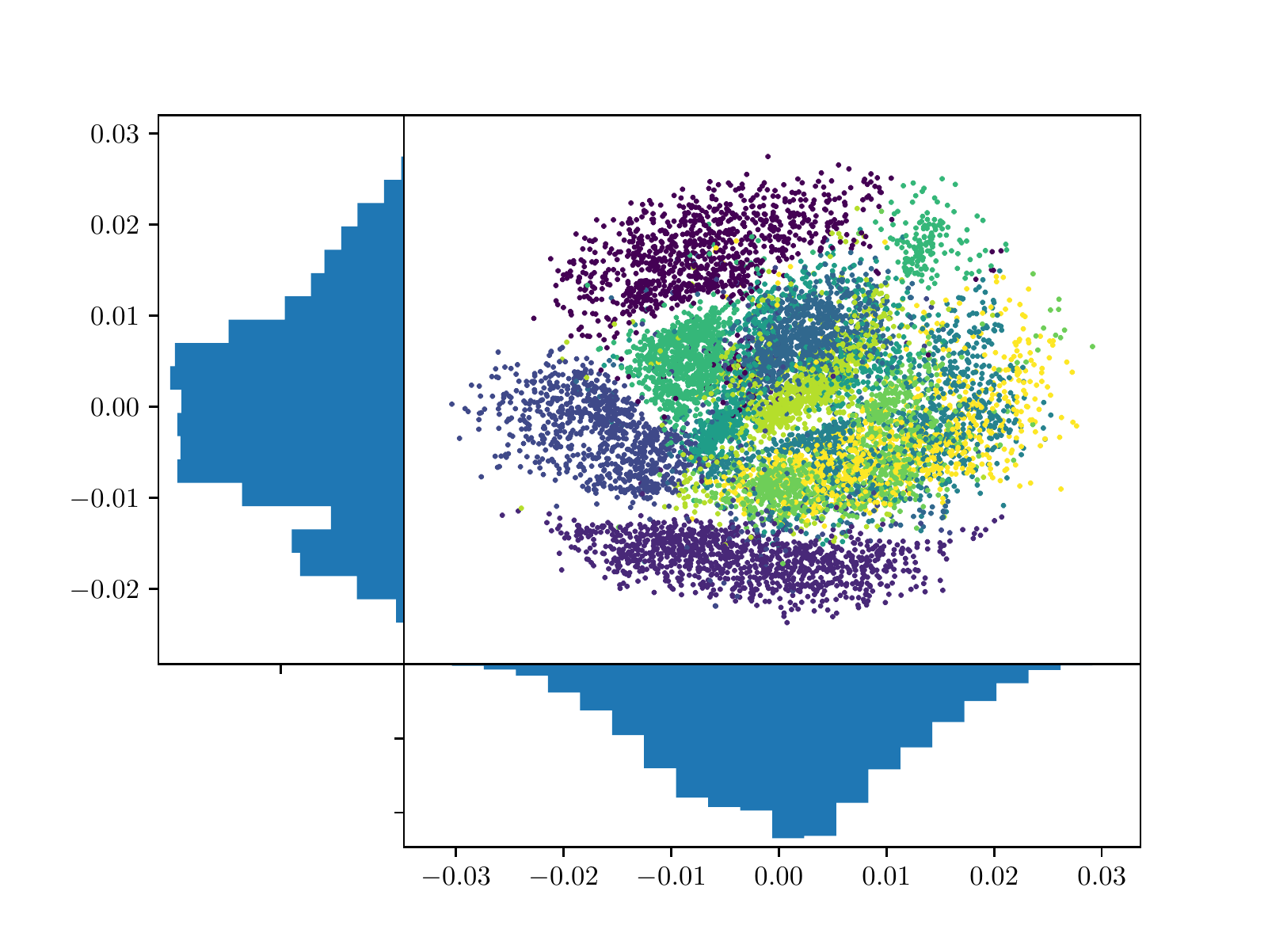} & \includegraphics[trim={0.1cm 0cm 0.1cm 0cm},clip,height=3.7cm]{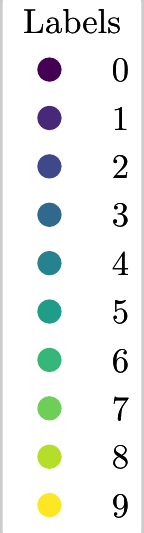}
	\end{tabular}{}
	\caption{MNIST: Scatter plot of latent variable distribution when trained on 10000 images ($s=2$). Training was unsupervised (i.e. one-view) and labels are only used to color the plot. The latent space resembles a Gaussian distribution centered around $0$, where the various digits are clustered together. Generated samples from a uniform grid over this space are shown in Figure~\ref{fig:mnist_unif}.}
	\label{fig:scatter_mnist}
\end{figure}

\subsection{The Gen-RKM Algorithm \label{sec:algo}}

Based on the previous discussion, we propose a novel procedure, called the Gen-RKM algorithm, combining kernel learning and generative models. We show that this procedure is efficient to train and evaluate. The training procedure simultaneously involves feature selection, common-subspace learning and pre-image map learning. This is achieved via an optimization procedure where one iteration involves an eigen-decomposition of the kernel matrix which is composed of the features from various views (see \eqref{eq:sup_KPCA}). The latent variables are given by the eigenvectors, from which a pre-image map reconstructs the generated sample. Figure~\ref{fig:schematic_Gen_rkm} shows a schematic representation of the algorithm when two data sources are available.

Thanks to training in $m$ mini-batches, this procedure is scalable to large datasets (sample size $N$) with training time scaling super-linearly with $ T_m = c \frac{N^{\gamma}}{m^{\gamma -1}}$, instead of $ T_k = c N^{\gamma} $, where $\gamma \approx 3$ for algorithms based on decomposition methods, with some proportionality constant $c$. The training time could be further reduced by computing the covariance matrix (size $(d_{f}+p_{f}) \times (d_{f}+p_{f})$) instead of a kernel matrix (size $\frac{N}{m} \times \frac{N}{m}$), when the sum of the dimensions of the feature-spaces is less than the samples in mini-batch i.e. $d_{f}+p_{f}\leq \frac{N}{m}$. When using neural networks as feature maps, $d_{f}$ and $p_{f}$ correspond to the number of neurons in the output layer, which are chosen as hyperparameters by the practitioner. Eigendecomposition of this smaller covariance matrix would yield $\bm{U}$ and $\bm{V}$ as eigenvectors (see \eqref{eq: cov_mat} and \ref{subsec:cov_mat} for detailed derivation), where computing the $\bm{h}_i$ involves only matrix-multiplication which is readily parallelizable on modern GPUs:
\begin{equation}\label{eq: cov_mat}
	\begin{bmatrix}
		\frac{1}{\eta_1} \bm{\Phi}_{\bm{x}}\bm{\Phi}_{\bm{x}}^{\top} & \frac{1}{\eta_1}\bm{\Phi}_{\bm{x}}\bm{\Phi}_{\bm{y}}^{\top} \\
		\frac{1}{\eta_2}\bm{\Phi}_{\bm{y}}\bm{\Phi}_{\bm{x}}^{\top}  & \frac{1}{\eta_2}\bm{\Phi}_{\bm{y}}\bm{\Phi}_{\bm{y}}^{\top}
	\end{bmatrix}
	\begin{bmatrix}
		\bm{U} \\ \bm{V}
	\end{bmatrix} =
	\begin{bmatrix}
		\bm{U} \\ \bm{V}
	\end{bmatrix}\bm{\Lambda}, \quad
	\begin{aligned}
		\bm{\Phi}_{\bm{x}}  \coloneqq\left[ \bm{\phi}_{1}(\bm{x}_{1}),\dots, \bm{\phi}_{1}(\bm{x}_{N}) \right], \\
		\bm{\Phi}_{\bm{y}}  \coloneqq\left[ \bm{\phi}_{2}(\bm{y}_{1}), \dots, \bm{\phi}_{2}(\bm{y}_{N}) \right].
	\end{aligned}
\end{equation}

\section{Experiments \label{sec:exp}}
To demonstrate the applicability of the proposed framework and algorithm, we trained the Gen-RKM model on a variety of datasets commonly used to evaluate generative models: MNIST \cite{lecun-mnisthandwrittendigit-2010}, Fashion-MNIST \cite{xiao2017/online}, CIFAR-10 \cite{cifar_10}, CelebA \cite{liu2015faceattributes}, Sketchy \cite{sketchy2016}, Dsprites \cite{dsprites17} and Teapot~\cite{eastwood2018a}.
The proposed method adheres to both a primal and dual formulation to incorporate both kernel based methods as well as neural networks based models in the same setting.
The convolutional neural networks are used as explicit feature maps which are known to outperform  kernel based feature maps on the image datasets. Moreover, by using explicit feature maps we demonstrate the capability of the algorithm to jointly learn the feature map and shared subspace representation. For completeness, we also give an illustration when using implicit feature maps through the Gaussian kernel in Figure \ref{fig:multi-view_gen}.
\begin{figure}[]
	\centering
	\begin{subfigure}[b]{0.40\textwidth}
		\centering
		\includegraphics[width=1\textwidth]{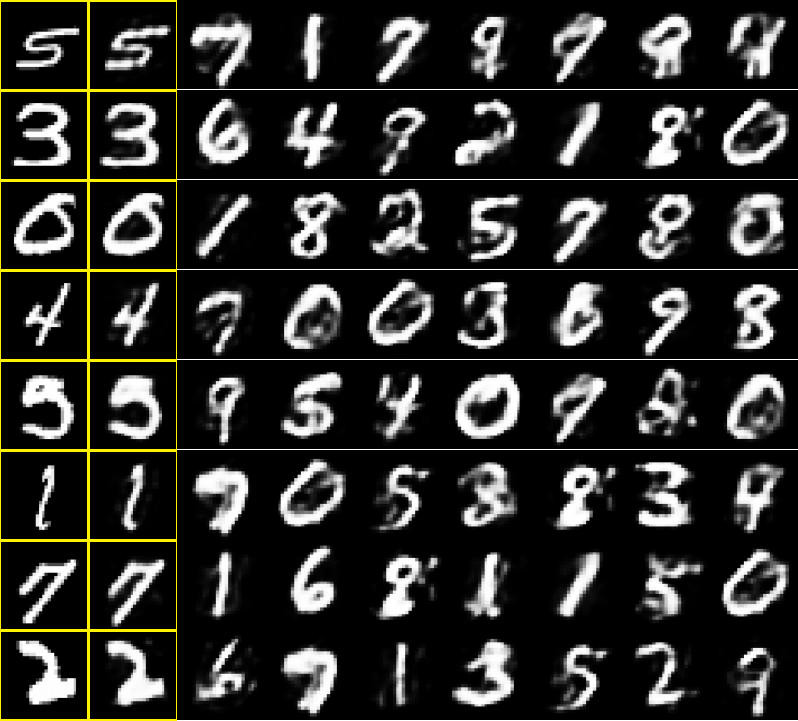}
		\caption{MNIST}
		\label{fig:mnist}
	\end{subfigure}
	\begin{subfigure}[b]{0.40\textwidth}
		\centering
		\includegraphics[width=1\textwidth]{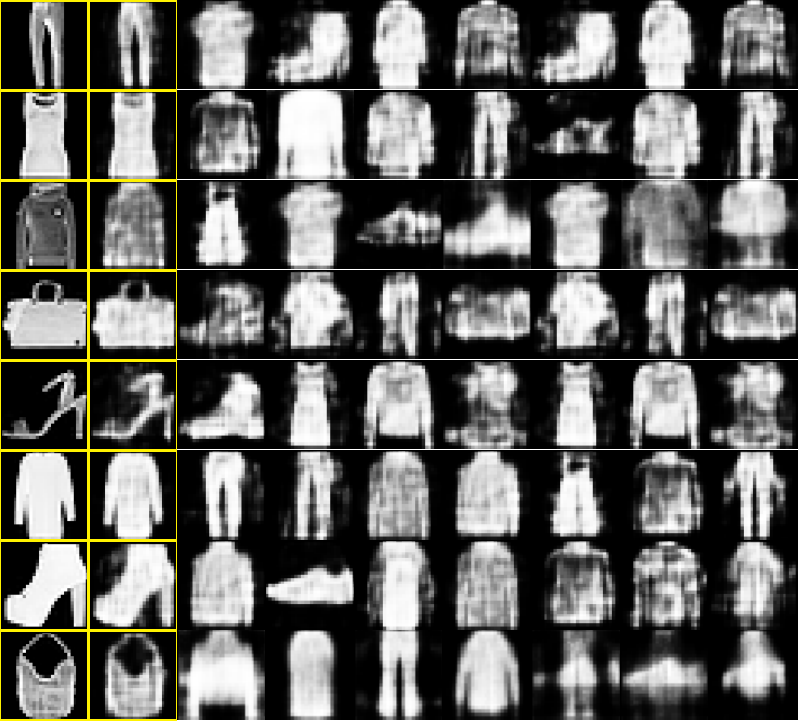}
		\caption{Fashion-MNIST}
		\label{fig:fashion}
	\end{subfigure}
	\\
	\begin{subfigure}[b]{0.40\textwidth}
		\centering
		\includegraphics[width=1\textwidth]{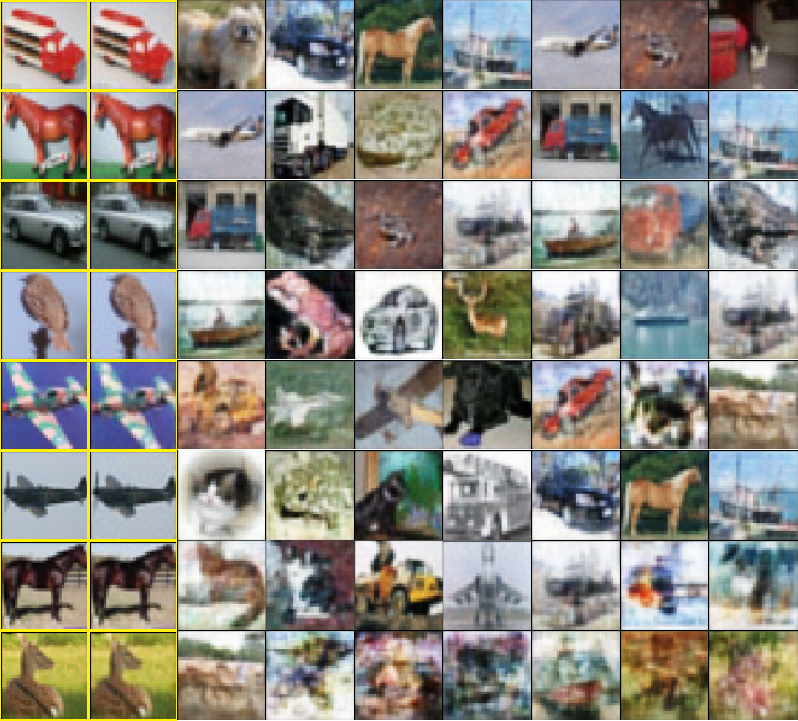}
		\caption{CIFAR-10}
		\label{fig:cifar-10}
	\end{subfigure}
	\begin{subfigure}[b]{0.40\textwidth}
		\centering
		\includegraphics[width=1\textwidth]{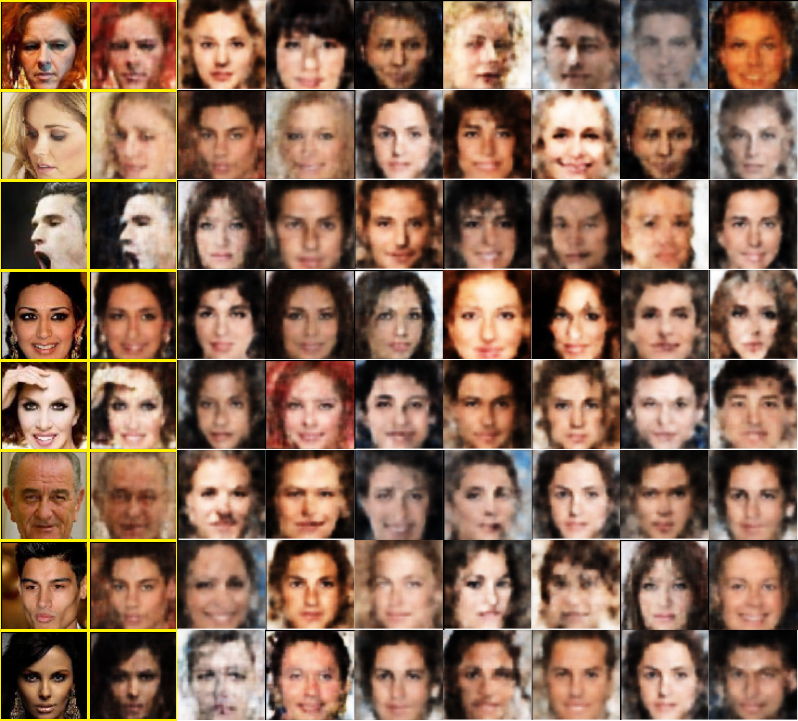}
		\caption{CelebA}
		\label{fig:celeba}
	\end{subfigure}
	\\
	\begin{subfigure}{0.40\textwidth}
		\centering
		\includegraphics[width=1\textwidth]{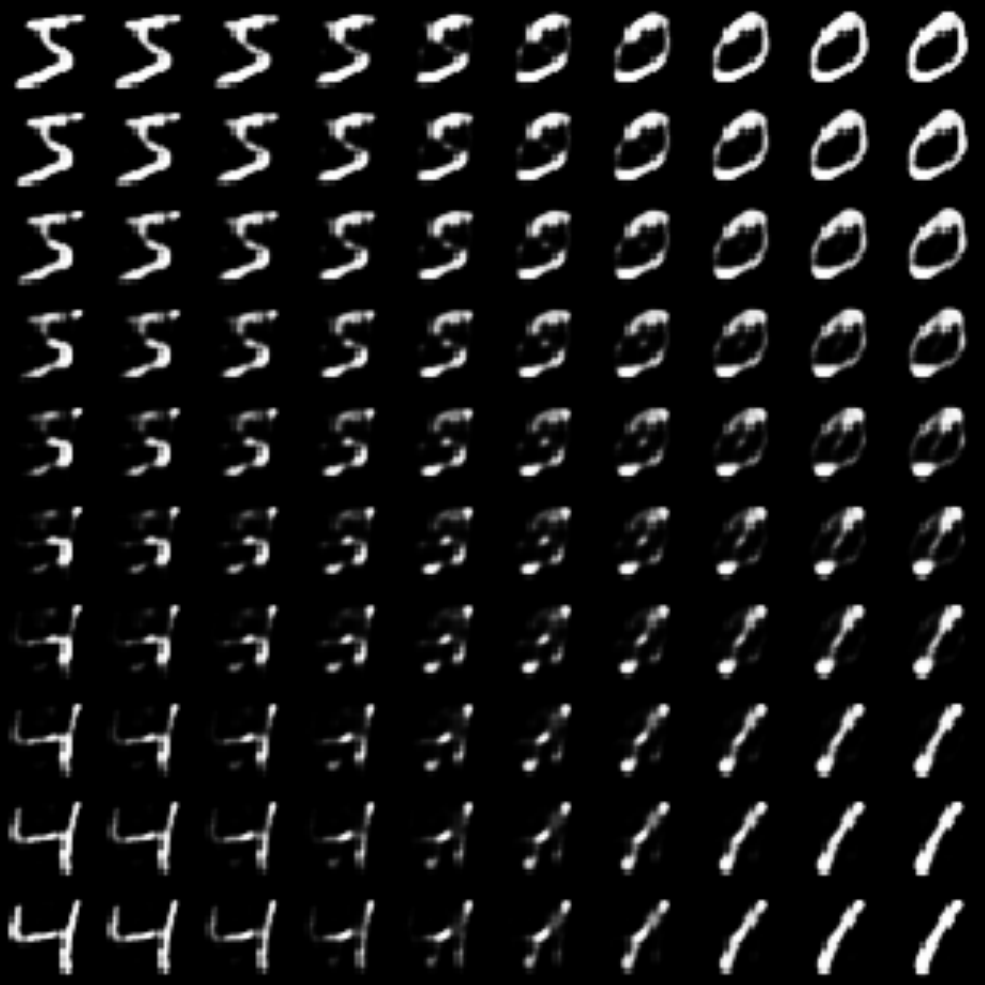}
		\caption{Bilinear interpolation}
		\label{fig:mnist_2d}
	\end{subfigure}
	\begin{subfigure}{0.40\textwidth}
		\centering
		\includegraphics[width=1\textwidth]{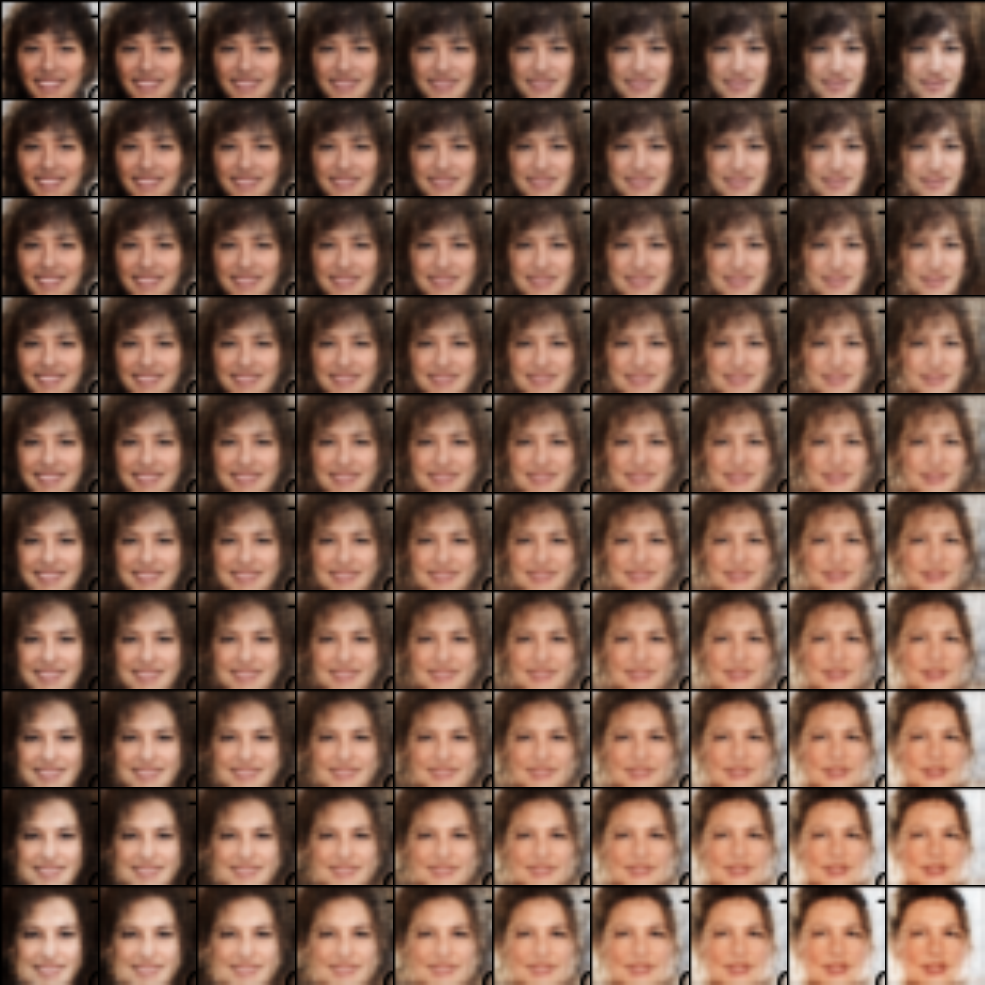}
		\caption{Bilinear interpolation}
		\label{fig:celeb_2d}
	\end{subfigure}
	\caption{Generated samples from the model using CNN as explicit feature map in the kernel function. In (a), (b), (c), (d) the yellow boxes in the first column show training examples and the adjacent boxes show the reconstructed samples. The other images (columns 3-6) are generated by random sampling from the fitted distribution over the learned latent variables. (e) and (f) show the generated images through bilinear interpolations in the latent space.}\label{fig:gen_cov}
\end{figure}

\begin{figure}[]
	\centering
	\resizebox{\textwidth}{!}{
	\begin{tabular}{c c c c}
		 \Huge{Original Images}                                             & \Huge{Images and sketches}                                             & \Huge{Images, sketches and labels} &    \\
		\includegraphics[width=1\textwidth]{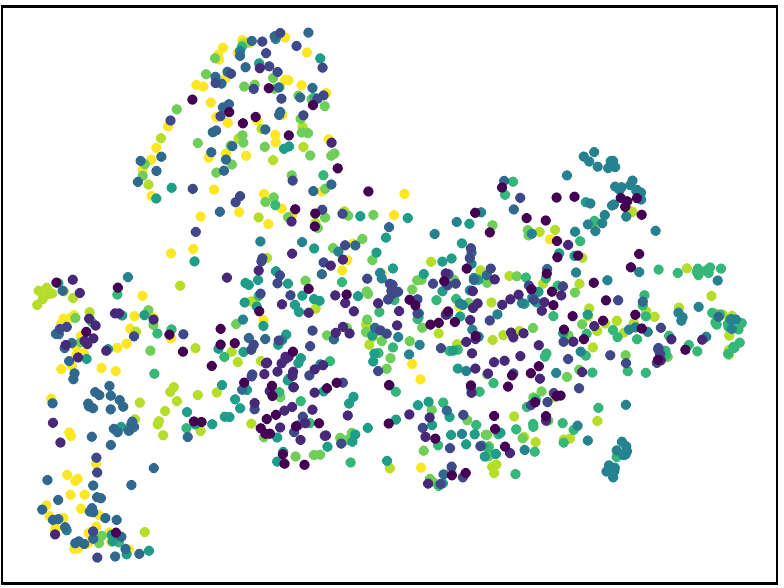}   & \includegraphics[width=1\textwidth]{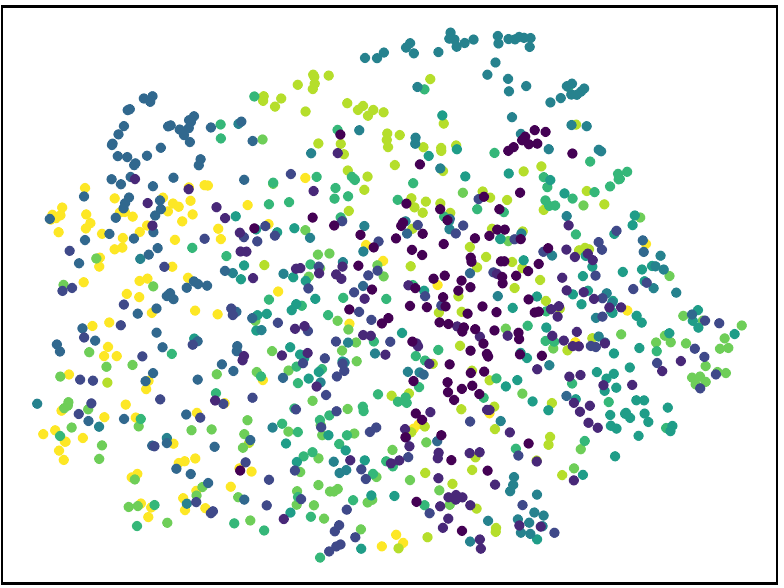} & \includegraphics[width=1\textwidth]{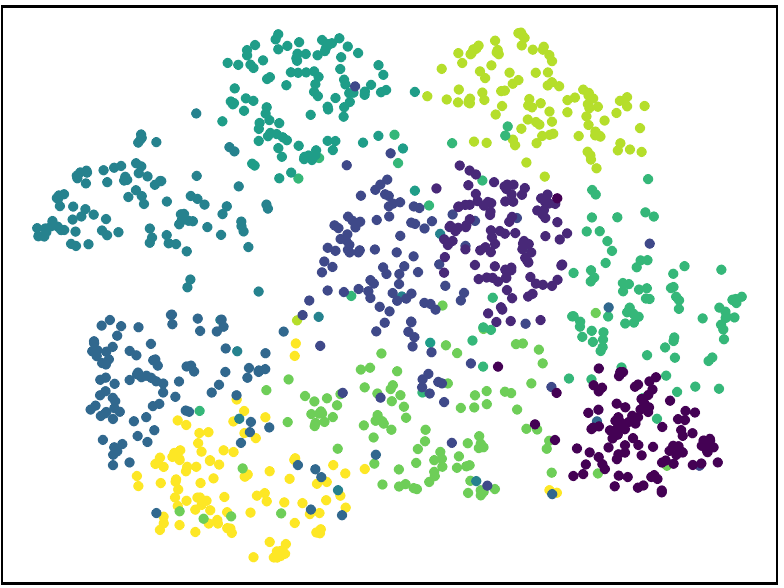} & \includegraphics[width=0.55\textwidth]{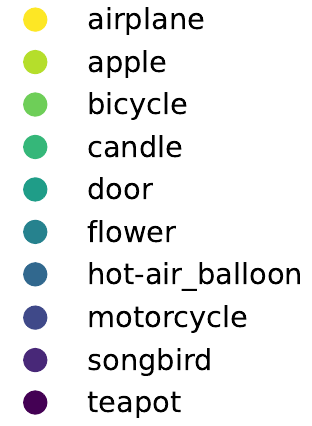}
	\end{tabular}
	}
	\caption{Learned latent space visualization of the Sketchy dataset in 1, 2 and 3-view Gen-RKM setting by using an UMAP embedding.~\cite{McInnes2018}}
	\label{tab:mul_view_scatter}
\end{figure}
\begin{algorithm}[h]
	\caption{Gen-RKM}\label{algo}
	\textbf{Input:} $ \{\bm{x}_{i}, \bm{y}_{i}\}_{i=1}^N,~\eta_{1},~\eta_{2} $, feature map $ \bm{\phi}_j ({\cdot}) $ - explicit \emph{or} implicit via kernels $ k_{j}(\cdot, \cdot), \text{for } j \in \{1, 2\} $ \\
	\textbf{Output:} Generated data $ \bm{x}^{\star},~\bm{y}^{\star}  $
	\begin{multicols}{2}
		\begin{algorithmic}[1]
			\Procedure{Train}{}
			\If{$\bm{\phi}_j ({\cdot})$ = Implicit}
			\State Solve the eigen-decomposition in \eqref{eq:sup_KPCA}
			\State Select the $s$ first principal components
			\ElsIf{$\bm{\phi}_j ({\cdot})$ = Explicit}
			\While{not converged}
			\State $\{\bm{x}, \bm{y}\} \gets \text{\{Get mini-batch\}} $
			\State $ \bm{\phi}_{1}(\bm{x})\gets \bm{x}; ~\bm{\phi}_{2}(\bm{y})\gets \bm{y}  $ 
			\State do steps $ 3\text{-}4 $
			\State $\{\hat{\bm{\phi}}_{1}(\bm{x}), \hat{\bm{\phi}}_{2}(\bm{y})\} \gets \bm{h} $ (\eqref{eq:impl_sol})
			\State $ \{\bm{x}, \bm{y}\} \gets \{\bm{\psi}_1(\hat{\bm{\phi}}_{1}(\bm{x})),  \bm{\psi}_2(\hat{\bm{\phi}}_{2}(\bm{y}))\} $ 
			\State $ \Delta \{\bm{\theta, \zeta}\} \propto -\nabla_{\{\bm{\theta, \zeta}\}} \mathcal{J}_{c} $ 
			\EndWhile
			\EndIf
			\EndProcedure
		\end{algorithmic}
		\columnbreak
		\begin{algorithmic}[1]
			\Procedure{Generation}{}
			\State Select $ \bm{h}^{\star}$
			\If{$\bm{\phi}_j ({\cdot})$ = Implicit}
			\State Set hyperparameter: $ n_{r} $
			\State Compute $ \bm{k}_{\bm{x}^{*}},~\bm{k}_{\bm{y}^{*}} $ (\eqref{eq:Kxy})
			\State $ \text{Get }{\hat{\bm{x}}},~{\hat{\bm{y}}} $ (\eqref{eq: pre-Ix,y})
			\ElsIf{$\bm{\phi}_j ({\cdot})$ = Explicit}
			\State do steps $ 10\text{-}11$
			\EndIf

			\EndProcedure
		\end{algorithmic}
	\end{multicols}
\end{algorithm}
In our experiments, we fit a Gaussian mixture model (GMM) with $l$ components to the latent variables of the training set, and randomly sample a new point $\bm{h}^{\star}$ for generating views. In case of explicit feature maps, we define $\bm{\phi}_{1_{\bm{\theta}_1}}$ and $\bm{\psi}_{1_{\bm{\zeta}_1}}$ as convolution and transposed-convolution neural networks, respectively \cite{dumoulin2016guide}; and $\bm{\phi}_{2_{\bm{\theta}_2}}$ and $\bm{\psi}_{1_{\bm{\zeta}_2}}$ as fully-connected networks. The particular architecture details are outlined in Table~\ref{tab:arch} in the Appendix. The training procedure in case of explicitly defined maps consists of minimizing $\mathcal{J}_{c}$ using the Adam optimizer \cite{Adam} to update the weights and biases. To speed-up learning, we subdivided the datasets into $m$ mini-batches, and within each iteration of the optimizer, \eqref{eq:sup_KPCA} is solved to model the subspace $\mathcal{H}$. Information on the datasets and hyperparameters used for the experiments is given in Table \ref{Table:dataset} in the Appendix. A comparison of the average training time is given in Table \ref{tab:train_times} in the Appendix.

\paragraph{Random Generation}
1) Qualitative examples:
Figure \ref{fig:gen_cov} shows the generated images using a convolutional neural network  and transposed-convolutional neural network as the feature map and pre-image map respectively. The first column in yellow-boxes shows the training samples and the second column on the right shows the reconstructed samples. The other images shown are generated by random sampling from a GMM over the learned latent variables. Notice that the reconstructed samples are of better quality visually than the other images generated by random sampling.  To demonstrate that the model has not merely memorized the training examples, we show the generated images via bilinear-interpolations in the latent space in Figure~\ref{fig:mnist_2d} and  Figure~\ref{fig:celeb_2d}.

\begin{table}[ht]
	\caption{FID Scores~\cite{Heusel2017} for randomly generated samples (smaller is better).}
	\label{Table:fid}
	\centering
	\footnotesize{
	\begin{tabular}{clccc}
		\textbf{Dataset}        & \textbf{Algorithm} & \multicolumn{3}{c}{\textbf{FID score ($\downarrow$)}}                                       \\ \cmidrule{3-5}
		&                    & $s = 10$                          & $s = 30$   & $s = 50$   \\ \midrule
		\multirow{4}{*}{MNIST}  & Gen-RKM            & \textbf{89.825}                         & \textbf{130.497} & \textbf{131.696} \\
		& VAE                & 250                                     & 234.749          & 205.282          \\
		& $\beta$-TCVAE  &    221.45	& 182.93 &	158.31\\
		& InfoGAN     &    238.75 &	204.63 &	179.43 \\ \midrule
		\multirow{4}{*}{CelebA} & Gen-RKM            & \textbf{103.299}                        & \textbf{84.403}  & \textbf{85.121}  \\
		& VAE                & 286.039                                 & 245.738          & 225.783          \\
		& $\beta$-TCVAE   &     248.47 &	226.75 &	173.21\\
		& InfoGAN     &    264.79 &	228.31 &	185.93\\ \midrule
		\multirow{4}{*}{fMNIST} & Gen-RKM  &     \textbf{93.437} &	\textbf{127.893}	& 146.643 \\
		& VAE      &  239.492 &	211.482	& 196.794\\
		& $\beta$-TCVAE       &   206.784 &	187.221 &	\textbf{136.466}\\
		& InfoGAN         &   247.853 &	215.683 &	199.337\\ \midrule
		\multirow{4}{*}{CIFAR10} & Gen-RKM     & \textbf{122.475}	& \textbf{138.467}	& \textbf{158.871} \\
		& VAE         &   295.382 &	259.557 &	231.475\\
		& $\beta$-TCVAE       &  283.58	& 214.681	& 168.483\\
		& InfoGAN         &  295.321	& 258.471	& 220.482\\\bottomrule
	\end{tabular}}
\end{table}

2) Quantitative comparison:
We compare the proposed model with the standard VAE \cite{kingma_auto-encoding_2013}, $\beta$-VAE~\cite{kingma_auto-encoding_2013}, $\beta$-TCVAE~\cite{chen2018isolating} and Info-GAN~\cite{chen2016infogan}. For the Info-GAN, batch normalization is added for training stability. As suggested by the authors, we keep $\alpha=\gamma=1$ and only modify the
hyperparameter $\beta$ for the  $\beta$-TCVAE model.  Determination of the $\beta$ hyperparameter is done by starting from values in the range of the parameters suggested in the authors' reference implementation. After trying various values we noticed that $\beta = 3$  seemed to work good across all datasets that we considered. For a fair comparison, the models have the same encoder/decoder architecture, optimization parameters and are trained until convergence, where the details are given in Table \ref{tab:arch}. We evaluate the performance qualitatively by comparing reconstruction and random sampling, the results are shown in Figure \ref{tab:ReconstructionComparison} in the Appendix. In order to quantitatively assess the quality of the randomly generated samples,  we use the Fr\'echet Inception Distance (FID) introduced by \cite{Heusel2017}. The results are reported in Table \ref{Table:fid}. Experiments were repeated for different latent-space dimensions ($s$), and we observe empirically that FID scores are better for the Gen-RKM. This is confirmed by the qualitative evaluation in Figure~\ref{tab:ReconstructionComparison}. An interesting trend could be noted that as the dimension of latent-space is increased, VAE gets better at generating images whereas the performance of Gen-RKM decreases slightly. This is attributed to the eigendecomposition of the kernel matrix whose eigenvalue spectrum decreases rapidly depicting that most information is captured in few principal components, while the rest is noise. The presence of noise hinders the convergence of the model. It is therefore important to select the number of latent variables proportionally to the size of the mini-batch and the corresponding spectrum of the kernel matrix (the diversity within a mini-batch affects the eigenvalue spectrum of the kernel matrix).

\paragraph{Multi-view Generation}
Figures \ref{fig:sketchy_mul_view} \& \ref{fig:multi-view_gen} demonstrate the multi-view generative capabilities of the model. In these datasets, labels or attributes are seen as another view of the image that provides extra information. One-hot encoding of the labels was used to train the model. Figure \hyperlink{fig:kernel_smooth_mnist}{5a} shows the generated images and labels when feature maps are only implicitly known i.e. through a Gaussian kernel. Figures \hyperlink{fig:mul_view_b}{5b}, \hyperlink{fig:mul_view_c}{5c} shows the same when using fully-connected networks as parametric functions to encode and decode labels. Next we show an illustration of multi-view generation on the Sketchy database~\cite{sangkloy2016sketchy}. The dataset is a collection of sketch-photo pairs resulting in 3 views: images, sketches and labels. The dataset includes 125 object categories with 12,500 natural object
images and 75,471 hand-drawn sketches for each class. The following pre-processing is done before training the GEN-RKM model: for the sketchy dataset, we selected 10 classes for training: airplane, apple, bicycle, candle, door, flower, hot-air-balloon, motorcycle, songbird and teapot. After that, the images are resized to $64 \times 64 \times 3$. Further details on the used model architectures and hyperparameters are given in the Appendix. The learned latent space is visualized on Figure \ref{tab:mul_view_scatter}. One can clearly observe that the joint learning of the different views results in a better separation of the classes. Joint random generations are given in Figure \ref{fig:sketchy_mul_view}.
\begin{figure}[H]
	\centering
	\includegraphics[width=0.70\textwidth]{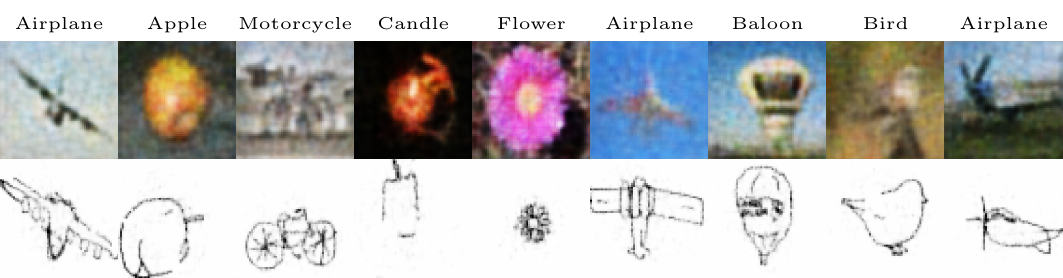}
	\caption{Multi-view generation on Sketchy dataset showing labels, images and sketches generated together from the common subspace.\label{fig:sketchy_mul_view}}
\end{figure}
\begin{figure}[H]
	\centering
	\begin{subfigure}[b]{\textwidth}
		\centering
		\includegraphics[width=0.70\textwidth]{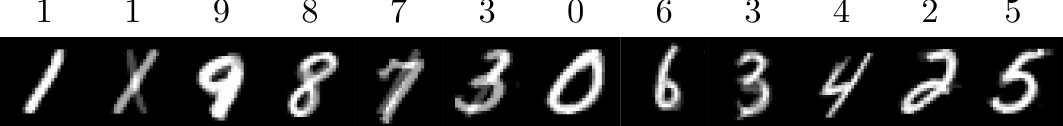}
		\hypertarget{fig:kernel_smooth_mnist}{}
	\end{subfigure}
	\\ \vspace{2pt}
	\begin{subfigure}[b]{\textwidth}
		\centering
		\includegraphics[width=0.70\textwidth]{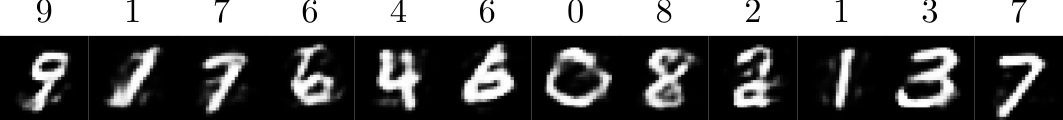}
		\hypertarget{fig:mul_view_b}{}
	\end{subfigure}
	\\ \vspace{2pt}
	\begin{subfigure}[b]{\textwidth}
		\centering
		\includegraphics[width=0.70\textwidth]{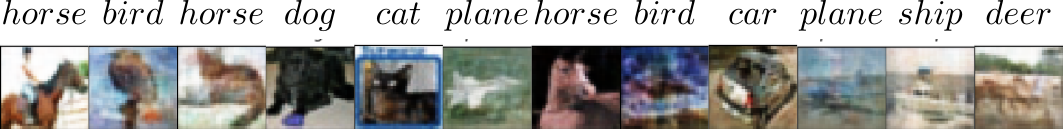}
		\hypertarget{fig:mul_view_c}{}
	\end{subfigure}
	\caption{Multi-view Generation (images and labels) on various datasets using implicit and explicit feature maps. a) MNIST: Implicit feature maps with Gaussian kernel are used during training. For generation, the pre-images are computed using the kernel-smoother method. b,~c) MNIST and CIFAR-10: Explicit feature maps and the corresponding pre-image maps are defined by the Convolutional Neural Networks and Transposed CNNs respectively.}
	\label{fig:multi-view_gen}
\end{figure}
\begin{figure}[H]
	\centering
	\includegraphics[width=0.70\textwidth]{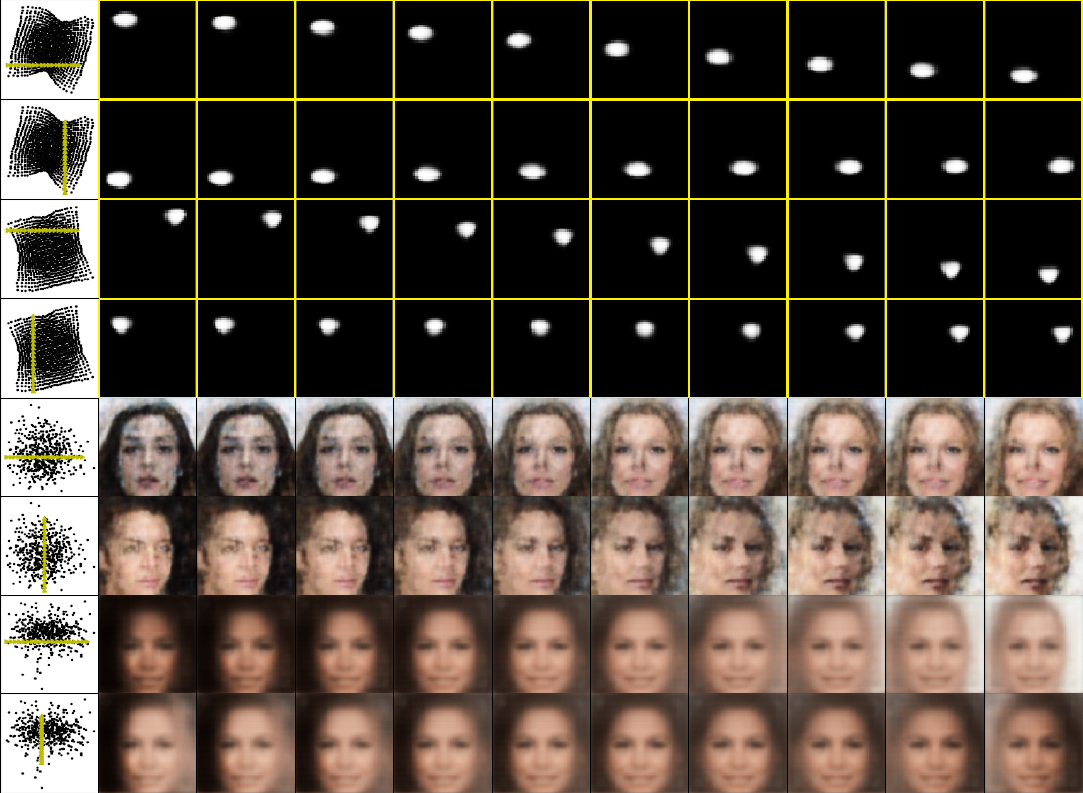}
	\caption{Exploring the learned uncorrelated-features by traversing along the eigenvectors. The first column shows the scatter plot of latent variables using the top two principal components. The green lines within, show the traversal in the latent space and the related rows show the corresponding reconstructed images.}
	\label{fig:uncorre}
\end{figure}
\begin{table}[H]
	\caption{Disentanglement Metric on DSprites and Teapot dataset with Lasso and Random Forest regressor \cite{eastwood2018a}. For disentanglement and completeness higher score is better, for informativeness, lower is better.}
	\label{Table:disang}
	\centering
	\begin{tabular}{cclcccc ccc}
		&  \multirow{2}{*}{s}               &           \multirow{2}{*}{\textbf{Algorithm}}                                & \multicolumn{3}{c}{\textbf{Lasso}} &  & \multicolumn{3}{c}{\textbf{Random Forest}}                                                                             \\
		\cmidrule{4-6} \cmidrule{8-10}
		\multirow{10}{*}{\rotatebox{90}{DSprites\qquad}} &            &                         & \textbf{Dis. ($\uparrow$)}                    & \textbf{Com. ($\uparrow$)}                            & \textbf{Inf. ($\downarrow$)} & & \textbf{Dis. ($\uparrow$)} & \textbf{Com. ($\uparrow$)} & \textbf{Inf. ($\downarrow$)} \\ \midrule
		& \multirow{5}{*}{$10$} & Gen-RKM                                   & 0.30                                & 0.10                                        & 0.87        &      & 0.12             & 0.10             & 0.28             \\
		&                       & VAE                                       & 0.11                                & 0.09                                        & \textbf{0.17} &    & \textbf{0.73}    & \textbf{0.54}    & \textbf{0.06}    \\
		&                       & $\beta \text{-VAE} \left(\beta =3\right)$ & {0.53}                       & \textbf{0.18}                               & 0.18          &    & 0.58             & 0.36             & \textbf{0.06}    \\
		&                       & $\beta\text{-TCVAE}\left(\beta =3\right)$ &   \textbf{0.55} &	0.17 &	0.18 &&	0.72 &	\textbf{0.54}	& 0.11\\
		&                       & Info-GAN &    0.37 &	0.13 &	0.22 &&	0.61 &	0.35 &	0.15\\\cmidrule{2-10}
		& \multirow{5}{*}{$2$}  & Gen-RKM                                   & \textbf{0.72}                       & \textbf{0.71}                               & \textbf{0.64} &    & \textbf{0.05}    & 0.19             & \textbf{0.03}    \\
		&                       & VAE                & 0.04                                & 0.01                                        & 0.87           &   & 0.01             & 0.13             & 0.11             \\
		&                       & $\beta \text{-VAE} \left(\beta =3\right)$ & 0.13                                & 0.40                                        & 0.71          &    & 0.00             & \textbf{0.26}    & 0.09             \\
		&                       & $\beta\text{-TCVAE}\left(\beta =3\right)$ &   0.51 &	0.15 &	0.67 & &	0.03 &	0.17 &	0.14\\
		&                       & Info-GAN &    0.46 &	0.14 &	0.66 &&	0.04 &	0.17 &	0.21\\ \midrule
		\multirow{10}{*}{\rotatebox{90}{Teapot\quad}}
		& \multirow{5}{*}{$10$} & Gen-RKM                                   & 0.28                                & {0.23}                                      & 0.39       &      & \textbf{0.48}    & \textbf{0.39}    & \textbf{0.19}    \\
		&                       & VAE                                       & {0.28}                              & {0.21}                                      & \textbf{0.36}&    & 0.30             & 0.27             & 0.21             \\
		&                       & $\beta \text{-VAE} \left(\beta =3\right)$ & {0.33}                       & \textbf{0.25}                               & \textbf{0.36}&    & 0.31             & 0.24             & 0.20             \\
		&                       & $\beta\text{-TCVAE}\left(\beta =3\right)$ &  \textbf{0.35}	& 0.24 & 0.39 &&	0.35 &	0.25 &	0.31\\
		&                       & Info-GAN &  0.23	& 0.2	& 0.41 &&	0.32 &	0.21 &	0.22\\\cmidrule{2-10}
		& \multirow{5}{*}{$5$}  & Gen-RKM                                   & 0.22                                & 0.23                                        & 0.74          &   & 0.08             & 0.09             & \textbf{0.27}    \\
		&                       & VAE                                       & 0.16                                & 0.14                                        & \textbf{0.66} &   & 0.11             & 0.14             & 0.28             \\
		&                       & $\beta \text{-VAE} \left(\beta =3\right)$ & {0.31}                       & {0.25}                               & 0.68          &   & \textbf{0.13}    & {0.15}    & 0.29             \\
		&                       & $\beta\text{-TCVAE}\left(\beta =3\right)$ &  \textbf{0.33} &	\textbf{0.26} &	0.69 &&	0.12 &	\textbf{0.16} &	0.29\\
		&                       & Info-GAN &    0.21 &	0.19 &	0.71 &&	0.11	& 0.14 &	0.28\\ \midrule
	\end{tabular}
\end{table}

\paragraph{Disentanglement}
1) Qualitative examples:
The latent variables are uncorrelated, which gives an indication that the model could resemble a disentangled representation. This is confirmed by the empirical evidence in Figure~\ref{fig:uncorre}, where we explore the uncorrelated features learned by the models on the Dsprites and celebA datasets. In our experiments, the Dsprites training dataset comprised of $32\times 32$ positions of oval and heart-shaped objects. The number of principal components chosen were 2 and the goal was to find out whether traversing in the direction of principal components, corresponds to traversing the generated images in one particular direction while preserving the shape of the object. Rows 1 and 2 of Figure \ref{fig:uncorre} show the reconstructed images of an oval while moving along first and second principal component respectively. Notice that the first and second components correspond to the $y$ and $x$ positions respectively. Rows 3 and 4 show the same for hearts. On the celebA dataset, we train the Gen-RKM with $15$ components on a subset.  Rows 5 and 6 shows the reconstructed images while traversing along the principal components. When moving along the first component from left-to-right, the hair-color of the women changes, while preserving the face structure. Whereas traversal along the second component, transforms a man to woman while preserving the orientation. When the number of principal components were 2 while training, the brightness and  background light-source corresponds to the two largest variances in the dataset. Also notice that, the reconstructed images are more blurry due to the selection of less number of components to model $\mathcal{H}$.

2) Quantitative comparisons: To quantitatively assess disentanglement performance, we compare Gen-RKM with VAE~\cite{kingma_auto-encoding_2013} and $\beta$-VAE~\cite{higgins2017beta} on the Dsprites and Teapot datasets \cite{eastwood2018a}. The models have the same encoder/decoder architecture, optimization parameters and are trained until convergence, where the details are given in Table \ref{tab:arch}. The performance is measured using the proposed framework\footnote{Code and dataset available at \url{https://github.com/cianeastwood/qedr}} of \cite{eastwood2018a}, which gives $3$ measures: disentanglement, completeness and informativeness. The results are shown in Table \ref{Table:disang}. Gen-RKM has good performance on the Dsprites dataset when the latent space dimension is equal to $2$. This is expected as the number of disentangled generating factors in the dataset is also equal to $2$, hence there are no noisy components in the kernel PCA hindering the convergence. The opposite happens in the case $h_{dim} = 10$, where noisy components are present. The above is confirmed by the Relative Importance Matrix on Figure \ref{fig:hinton_dsp} in the Appendix, where the 2 generating factors are well separated in the latent space of the Gen-RKM. For the Teapot dataset, Gen-RKM has good performance when $s = 10$. More components are needed to capture all variations in the dataset, where the number of generating factors is now equal to $5$. In the other cases, Gen-RKM has a performance comparable to the others. Note that the model selection was done a-priori, that is, the hyperparameters of classifiers were selected before evaluating the disentanglement metric. This may explain the poor scores for Gen-RKM with Random Forest classifier in Teapot dataset ($s=5$).

\section{Conclusion and future work \label{sec:conc}}

The paper proposes a novel framework, called Gen-RKM, for generative models based on RKMs with extensions to multi-view generation and learning uncorrelated representations. This allows for a  mechanism where the feature map can be implicitly defined using kernel functions or explicitly by (deep) neural network based methods. When using kernel functions, the training consists of only solving an eigenvalue problem. In the case of a (convolutional) neural network based explicit feature map, we used (transposed) networks as the pre-image functions. Consequently, a training procedure was proposed which involves joint feature-selection and subspace learning. Thanks to training in mini-batches and capability of working with covariance matrices, the training is scalable to large datasets. Experiments on benchmark datasets illustrate the merit of the proposed framework for generation quality as well as disentanglement.  Extensions of this work consists of adapting the model to more advanced multi-view datasets involving speech, images and texts; further analysis on other feature maps, pre-image methods, loss-functions and uncorrelated feature learning. Finally, this paper has demonstrated the applicability of the Gen-RKM framework, suggesting new research directions to be worth exploring.

\section*{Acknowledgments}
\footnotesize{EU: The research leading to these results has received funding from
the European Research Council under the European Union's Horizon
2020 research and innovation program / ERC Advanced Grant E-DUALITY
(787960). This paper reflects only the authors' views and the Union
is not liable for any use that may be made of the contained information.
Research Council KUL: Optimization frameworks for deep kernel machines C14/18/068
Flemish Government:
FWO: projects: GOA4917N (Deep Restricted Kernel Machines:
Methods and Foundations), PhD/Postdoc grant
Impulsfonds AI: VR 2019 2203 DOC.0318/1QUATER Kenniscentrum Data
en Maatschappij
Ford KU Leuven Research Alliance Project KUL0076 (Stability analysis
and performance improvement of deep reinforcement learning algorithms), ICT 48 TAILOR, Leuven.AI Institute}

\FloatBarrier

\bibliography{references}

\newpage
\appendix

\normalsize

\section{Derivation of Gen-RKM objective function}
\label{subsec: Gen-RKM objective function}
Given $\mathcal{D}= \{\bm{x}_i, \bm{y}_i\}_{i=1}^{N} $, where $ \bm{x}_i \in \mathbb{R}^d $, $ \bm{y}_i \in \mathbb{R}^p $ and feature-map $\bm{\phi}_{1}: \Omega_{x}\mapsto \mathcal{H}_{x}$ and $\bm{\phi}_{2}: \Omega_{y}\mapsto \mathcal{H}_{y}$, the Least-Squares Support Vector Machine (LS-SVM) formulation of Kernel PCA \cite{suykens_least_2002} for the two data sources can be written as:
\begin{equation}
\begin{aligned}
\min_{\bm{U,V,e}_i} & ~\dfrac{\eta_{1}}{2}\tr(\bm{U}^\top \bm{U} ) + \dfrac{\eta_{2}}{2}\tr(\bm{V}^\top \bm{V}) - \dfrac{1}{2}\sum_{i=1}^{N}\bm{e}_{i}^\top\bm{\Lambda}^{-1} \bm{e}_i \\
\text{s.t.}                   & ~ \bm{e}_i=\bm{U}^{\top}\bm{\phi}_{1}(\bm{x}_i) + \bm{V}^{\top}\bm{\phi}_{2}(\bm{y}_i) \quad \forall i=1,\ldots,N,
\end{aligned}
\label{eq: lssvm_kpca}
\end{equation}
where $ \bm{U} \in \mathbb{R}^{d \times s} $ and $ \bm{V} \in \mathbb{R}^{p \times s} $ are the interconnection matrices.

Using the notion of \emph{conjugate feature duality} introduced in \cite{suykens_deep_2017}, the error variables $\bm{e}_{i}$ are conjugated to latent variables $\bm{h}_{i}$ using:
\begin{equation}
\frac{1}{2}\bm{e}^{\top}\bm{\Lambda}^{-1}\bm{e} + \frac{1}{2}\bm{h}^{\top}\bm{\Lambda}\bm{h} \geq \bm{e}^{\top}\bm{h}, \qquad \forall \bm{e}, \bm{h} \in \mathbb{R}^{s}
\label{eq: conj_feat}
\end{equation}
which is also known as the Fenchel-Young inequality for the case of quadratic functions \cite{rockafeller1987}. By eliminating the variables $\bm{e}_{i}$ from \eqref{eq: lssvm_kpca} and using  \eqref{eq: conj_feat}, we obtain the Gen-RKM training objective function:
\begin{equation*}
\begin{aligned}
\mathcal{J}_{t} = \sum_{i=1}^{N} \left(-\bm{\phi}_{1}(\bm{x}_{i})^\top \bm{U}\bm{h}_i - \bm{\phi}_{2}(\bm{y}_{i})^\top \bm{V} \bm{h}_i + \frac{1}{2} \bm{h}_{i}^\top \bm{\Lambda} \bm{h}_i\right)
 + \frac{\eta_{1}}{2}\tr(\bm{U}^\top \bm{U}) + \frac{\eta_{2}}{2}\tr(\bm{V}^\top \bm{V}) .
\end{aligned}
\end{equation*}

\subsection{Computing latent variables using covariance matrix}
\label{subsec:cov_mat}

From \eqref{eq:2}, eliminating the variables $\bm{h}_{i}$ yields the following:
\begin{equation*}
\begin{aligned}
\frac{1}{\eta_1}\left[ \sum_{i=1}^{N}\bm{\phi}_{1}(\bm{x}_{i})\bm{\phi}_{1}(\bm{x}_{i})^\top \bm{U} + \sum_{i=1}^{N}\bm{\phi}_{1}(\bm{x}_{i})\bm{\phi}_{2}(\bm{y}_{i})^\top \bm{V}  \right] & = \bm{\Lambda}\bm{U}, \\
\frac{1}{\eta_2}\left[  \sum_{i=1}^{N}\bm{\phi}_{2}(\bm{y}_{i})\bm{\phi}_{1}(\bm{x}_{i})^\top \bm{U} + \sum_{i=1}^{N}\bm{\phi}_{2}(\bm{y}_{i})\bm{\phi}_{2}(\bm{y}_{i})^\top \bm{V} \right] & = \bm{\Lambda}\bm{V}.
\end{aligned}
\end{equation*}

Denote $\bm{\Phi}_{\bm{x}}\coloneqq\left[ \bm{\phi}_{1}(\bm{x}_{1}),\dots, \bm{\phi}_{1}(\bm{x}_{N}) \right]$, $ \bm{\Phi}_{\bm{y}}\coloneqq\left[ \bm{\phi}_{2}(\bm{y}_{1}),\dots, \bm{\phi}_{2}(\bm{y}_{N}) \right]$ and the diagonal matrix $ \bm{\Lambda} =\diag\{\lambda_1,\ldots,\lambda_s\}\in\mathbb{R}^{s\times s} $ with $s \leq N$. Now, composing the above equations in matrix form, we get the following eigen-decomposition problem:
\begin{equation*}\label{eq: cov_mat_appen}
\begin{bmatrix}
\frac{1}{\eta_1} \bm{\Phi}_{\bm{x}}\bm{\Phi}_{\bm{x}}^{\top} & \frac{1}{\eta_1}\bm{\Phi}_{\bm{x}}\bm{\Phi}_{\bm{y}}^{\top} \\
\frac{1}{\eta_2}\bm{\Phi}_{\bm{y}}\bm{\Phi}_{\bm{x}}^{\top}  & \frac{1}{\eta_2}\bm{\Phi}_{\bm{y}}\bm{\Phi}_{\bm{y}}^{\top}
\end{bmatrix}
\begin{bmatrix}
\bm{U} \\ \bm{V}
\end{bmatrix} =
\begin{bmatrix}
\bm{U} \\ \bm{V}
\end{bmatrix}\bm{\Lambda}.
\end{equation*}

Here the size of the covariance matrix is $(d_f + p_f)\times (d_f + p_f)$. The latent variables $\bm{h}_{i}$ can be computed using \eqref{eq:2}, which simply involves matrix multiplications.

\section{Stabilizing the objective function \label{sec:stabilizationTerm}}

\begin{proposition}\label{prop2}
	All stationary solutions for $\bm{H}$,$\bm{\Lambda}$ in  \eqref{eq:sup_KPCA} of $ \mathcal{J}_{t} $ lead to $ \mathcal{J}_{t}=0 $.
\end{proposition}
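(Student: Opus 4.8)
The plan is to substitute the three stationarity conditions of \eqref{eq:2} directly into the objective \eqref{eq:obj_train} and verify that everything cancels exactly. First I would rewrite each cross term using the trace: since $\bm{\phi}_1(\bm{x}_i)^\top \bm{U} \bm{h}_i$ is a scalar, the cyclic property gives $\sum_{i=1}^{N} \bm{\phi}_1(\bm{x}_i)^\top \bm{U} \bm{h}_i = \tr\!\left( \bm{U} \sum_{i=1}^{N} \bm{h}_i \bm{\phi}_1(\bm{x}_i)^\top \right)$. Transposing the second condition of \eqref{eq:2} yields $\sum_{i=1}^{N} \bm{h}_i \bm{\phi}_1(\bm{x}_i)^\top = \eta_1 \bm{U}^\top$, so this cross term collapses to $\eta_1 \tr(\bm{U}^\top \bm{U})$. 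The identical manipulation with the third condition gives $\sum_{i=1}^{N} \bm{\phi}_2(\bm{y}_i)^\top \bm{V} \bm{h}_i = \eta_2 \tr(\bm{V}^\top \bm{V})$.

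Next I would treat the quadratic term $\frac{1}{2}\sum_{i=1}^{N} \bm{h}_i^\top \bm{\Lambda} \bm{h}_i$ using the first condition $\bm{\Lambda}\bm{h}_i = \bm{U}^\top \bm{\phi}_1(\bm{x}_i) + \bm{V}^\top \bm{\phi}_2(\bm{y}_i)$. Substituting and transposing the resulting scalars gives $\bm{h}_i^\top \bm{\Lambda} \bm{h}_i = \bm{\phi}_1(\bm{x}_i)^\top \bm{U} \bm{h}_i + \bm{\phi}_2(\bm{y}_i)^\top \bm{V} \bm{h}_i$, so by the previous step the quadratic term equals $\frac{1}{2}\big(\eta_1 \tr(\bm{U}^\top \bm{U}) + \eta_2 \tr(\bm{V}^\top \bm{V})\big)$. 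Writing $a := \tr(\bm{U}^\top \bm{U})$ and $b := \tr(\bm{V}^\top \bm{V})$, the bracketed summation in \eqref{eq:obj_train} then contributes $-\eta_1 a - \eta_2 b + \frac{1}{2}(\eta_1 a + \eta_2 b)$, while the two explicit regularization terms contribute $\frac{\eta_1}{2} a + \frac{\eta_2}{2} b$; these sum to $0$, which is exactly the claim.

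To phrase this for the solutions named in the statement, I would recall that \eqref{eq:sup_KPCA} was obtained precisely by eliminating $\bm{U}$ and $\bm{V}$ from \eqref{eq:2}. Hence any $(\bm{H},\bm{\Lambda})$ solving the eigenvalue problem, together with the $\bm{U},\bm{V}$ reconstructed from the last two conditions of \eqref{eq:2}, constitutes a genuine stationary point of $\mathcal{J}_t$, so the cancellation above applies verbatim. I do not expect a real obstacle here, since the result is an exact-cancellation identity rather than an inequality or limit; the only point needing care is the bookkeeping, namely checking that the factor $\frac{1}{2}$ on the quadratic term combines with the two regularizers to offset the full cross terms, and that the transpose of \eqref{eq:2} is applied with the correct $\eta_1,\eta_2$ scaling.
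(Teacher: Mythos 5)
Your proof is correct, and its bookkeeping is genuinely different from the paper's. The paper works in the dual (kernel-matrix) picture: after substituting $\bm{U}$ and $\bm{V}$ from \eqref{eq:2} into \eqref{eq:obj_train}, it uses the first stationarity condition to collapse the cross terms, leaving $-\frac{1}{2}\sum_{i}\bm{h}_i^\top\bm{\Lambda}\bm{h}_i+\frac{1}{2}\tr\left(\bm{H}\left[\frac{1}{\eta_1}\bm{K}_1+\frac{1}{\eta_2}\bm{K}_2\right]\bm{H}^\top\right)$, and then invokes the eigenvalue equation \eqref{eq:sup_KPCA} once more to rewrite the trace as $\frac{1}{2}\tr(\bm{H}\bm{H}^\top\bm{\Lambda})=\frac{1}{2}\sum_i\bm{h}_i^\top\bm{\Lambda}\bm{h}_i$, giving zero. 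You never form $\bm{K}_1,\bm{K}_2$ at all: everything is expressed through $a=\tr(\bm{U}^\top\bm{U})$ and $b=\tr(\bm{V}^\top\bm{V})$, and the cancellation is the scalar identity $-(\eta_1 a+\eta_2 b)+\frac{1}{2}(\eta_1 a+\eta_2 b)+\frac{\eta_1}{2}a+\frac{\eta_2}{2}b=0$. The trade-off is this: the paper's route uses the hypothesis exactly as stated (solutions $\bm{H},\bm{\Lambda}$ of \eqref{eq:sup_KPCA}), so no bridging is required, whereas your primal route rests on your closing claim that a solution of \eqref{eq:sup_KPCA}, with $\bm{U},\bm{V}$ reconstructed from the last two equations of \eqref{eq:2}, also satisfies the first equation of \eqref{eq:2}. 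That claim is true but deserves one explicit line: transposing \eqref{eq:sup_KPCA} (the kernel matrices are symmetric and $\bm{\Lambda}$ is diagonal) gives $\bm{\Lambda}\bm{H}=\bm{H}\left[\frac{1}{\eta_1}\bm{K}_1+\frac{1}{\eta_2}\bm{K}_2\right]$, and reading off the $i$-th column of this identity, with your reconstructed $\bm{U},\bm{V}$, is precisely $\bm{\Lambda}\bm{h}_i=\bm{U}^\top\bm{\phi}_1(\bm{x}_i)+\bm{V}^\top\bm{\phi}_2(\bm{y}_i)$. What your version buys in exchange is that, being purely primal, it carries over verbatim to the covariance formulation \eqref{eq: cov_mat} used when the feature dimensions are smaller than the mini-batch size, where one reasons with $\bm{U},\bm{V}$ rather than with $\bm{H}$ and kernel matrices.
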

\begin{proof}
	Let $\lambda_i, \bm{h}_i$ are given by \eqref{eq:sup_KPCA}. Using  \eqref{eq:2} to substitute $ \bm{V} $ and $ \bm{U} $  in \eqref{eq:obj_train} yields:
	\[
	\begin{aligned}
		\mathcal{J}_t(\bm{V},\bm{U},\bm{\Lambda},\bm{H})  = & \sum_{i=1}^{N} - \frac{1}{2} \bm{h}_{i}^\top\bm{\Lambda} \bm{h}_i
		+ \frac{\eta_{1}}{2}\tr\left(\dfrac{1}{\eta_{1}^2} \sum_{i=1}^{N} \bm{h}_{i} \bm{\phi}_{1}(\bm{x}_{i})^\top  \sum_{j=1}^{N} \bm{\phi}_{1}(\bm{x}_{j}) \bm{h}_{j}^\top \right) \\
		& + \frac{\eta_{2}}{2}\tr\left(\dfrac{1}{\eta_{2}^2} \sum_{i=1}^{N} \bm{h}_{i} \bm{\phi}_{2}(\bm{y}_{i})^\top  \sum_{j=1}^{N} \bm{\phi}_{2}(\bm{y}_{j}) \bm{h}_{j}^\top \right) \\
		= & \sum_{i=1}^{N} - \frac{1}{2} \bm{h}_{i}^\top \bm{\Lambda}\bm{h}_i + \frac{\eta_{1}}{2}\tr\left(\dfrac{1}{\eta_{1}^2}\bm{H}\bm{K}_{1}\bm{H}^\top \right)
		+ \frac{\eta_{2}}{2}\tr\left(\dfrac{1}{\eta_{2}^2}\bm{H}\bm{K}_{2}\bm{H}^\top\right) \\
		= & \sum_{i=1}^{N} - \frac{1}{2} \bm{h}_{i}^\top\bm{\Lambda} \bm{h}_i + \frac{1}{2}\tr\left(\bm{H} \left[\dfrac{1}{\eta_{1}}\bm{K}_{1} + \dfrac{1}{\eta_{2}}\bm{K}_{2}\right]\bm{H}^\top\right).
	\end{aligned}
	\]
	From  \eqref{eq:sup_KPCA}, we get:
	\[
	\begin{aligned}
		\mathcal{J}_t(\bm{V},\bm{U},\bm{\Lambda},\bm{H}) &= \sum_{i=1}^{N} - \frac{1}{2} \bm{h}_{i}^\top\bm{\Lambda} \bm{h}_i + \frac{1}{2}\tr\left(\bm{H} \bm{H}^\top \bm{\Lambda} \right) \\
		& = \sum_{i=1}^{N} - \frac{1}{2} \bm{h}_{i}^\top\bm{\Lambda} \bm{h}_i + \frac{1}{2}\sum_{i=1}^{N} \bm{h}_{i}^\top\bm{\Lambda} \bm{h}_{i}
		=   0.
	\end{aligned}
	\]
\end{proof}

\begin{proposition}\label{prop1}
	Let $ J(\bm{x}):\mathbb{R}^N\xrightarrow{}\mathbb{R} $ be a smooth function, for all $ \bm{x}\in \mathbb{R}^N$ and for $ c\in \mathbb{R}_{>0} $,   define $ \bar{J}(\bm{x}) := J(\bm{x}) + \dfrac{c}{2}J(\bm{x})^{2} $. Assuming $(1+cJ(\bm{x}))\neq 0$, then $\bm{x}^{\star}$ is the stationary point of $ \bar{J}(\bm{x})$ iff $\bm{x}^{\star}$ is the stationary point for $ {J}(\bm{x})$.
\end{proposition}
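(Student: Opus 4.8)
The plan is to reduce the entire statement to a single gradient computation. Since $\bar{J}(\bm{x}) = J(\bm{x}) + \frac{c}{2}J(\bm{x})^2$ and $J$ is smooth, I would apply the chain rule to obtain
\[
\nabla \bar{J}(\bm{x}) = \nabla J(\bm{x}) + c\,J(\bm{x})\,\nabla J(\bm{x}) = \bigl(1 + c\,J(\bm{x})\bigr)\,\nabla J(\bm{x}),
\]
so that the gradient of the stabilized objective is nothing but a scalar multiple of the gradient of $J$. This factorization is the whole content of the proof.

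With this identity in hand the equivalence is immediate. A point $\bm{x}^{\star}$ is stationary for $\bar{J}$ precisely when $\bigl(1 + c\,J(\bm{x}^{\star})\bigr)\nabla J(\bm{x}^{\star}) = \bm{0}$. For the direction ``$J$ stationary $\Rightarrow$ $\bar{J}$ stationary'' I would simply note that $\nabla J(\bm{x}^{\star}) = \bm{0}$ forces the product to vanish irrespective of the scalar factor. For the converse, the hypothesis $1 + c\,J(\bm{x}^{\star}) \neq 0$ lets me divide out the nonzero scalar, leaving $\nabla J(\bm{x}^{\star}) = \bm{0}$, which is exactly stationarity of $J$.

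There is no genuine obstacle here; the only point requiring care is the role of the non-degeneracy assumption $1 + c\,J \neq 0$. It is needed solely for the converse direction: without it, $\bm{x}^{\star}$ could be stationary for $\bar{J}$ because the factor $1 + c\,J(\bm{x}^{\star})$ vanishes rather than because $\nabla J(\bm{x}^{\star}) = \bm{0}$, so the two stationarity conditions would no longer coincide. Since $c > 0$, the hypothesis merely excludes the level set $\{\bm{x} : J(\bm{x}) = -1/c\}$. I would close by remarking that this is precisely what legitimizes using $\mathcal{J}_{stab}$ as a surrogate for $\mathcal{J}_{t}$ in the algorithm: minimizing the stabilized objective introduces no spurious critical points relative to the original one.
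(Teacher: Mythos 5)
Your proof is correct and is essentially identical to the paper's own argument: both rest on the factorization $\nabla \bar{J}(\bm{x}) = \left(1 + c\,J(\bm{x})\right)\nabla J(\bm{x})$, with the forward direction following immediately and the converse using the non-degeneracy assumption $1 + c\,J(\bm{x}^{\star}) \neq 0$ to cancel the scalar factor. Your closing remark on why this assumption is needed only for the converse, and its algorithmic significance for $\mathcal{J}_{stab}$, is a correct and welcome clarification, but does not change the substance of the argument.
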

\begin{proof}
	Let $ \bm{x}^{\star} $ be a stationary point of $ J(\bm{x}) $, meaning that $ \nabla J(\bm{x}^{\star}) = 0$. The stationary points for $ \bar{J}(\bm{x})$ can be obtained from:
	\begin{equation}\label{eq: stat_pt_jx}
	\dfrac{d \bar{J}}{d \bm{x}} =\left(\nabla J(\bm{x}) + cJ(\bm{x})\nabla J(\bm{x})\right)
	= \left(1+c J(\bm{x})\right)\nabla J(\bm{x}).
	\end{equation}
	It is easy to see from \eqref{prop1} that if $ \bm{x}=\bm{x}^{*} $, $ \nabla J(\bm{x}^*) = 0$, we have that $ \dfrac{d \bar{J}}{d \bm{x}}\Big|_{\bm{x}^*}=0$, meaning that all the stationary points of  $ J(\bm{x}) $ are stationary points of $ \bar{J}(\bm{x}) $.

	To show the other way, let $\bm{x}^{\star}$ be stationary point of $\bar{J}(\bm{x})$ i.e. $\nabla \bar{J}(\bm{x}^{\star})=0.$ Assuming $(1+cJ(\bm{x}^{\star}))\neq 0$, then from \eqref{eq: stat_pt_jx} for all $ c\in \mathbb{R}_{>0}$, we have
	\begin{equation*}
	\left(1+cJ(\bm{x}^{\star})\right)\nabla J(\bm{x}^{\star})=0,
	\end{equation*}
	implying that $ \nabla J(\bm{x}^{\star}) = 0$.
\end{proof}

Based on the above propositions, we stabilize our original objective function \eqref{eq:obj_train} to keep it bounded and hence is suitable for minimization with Gradient-descent methods. Without the reconstruction errors, the stabilized objective function is
\begin{equation*}
\min_{\bm{U},\bm{V},\bm{h}_{i}}\mathcal{J}_{t} + \frac{c}{2}\mathcal{J}_{t}^2.
\end{equation*}
Denoting $ \bar{J} = \mathcal{J}_{t} + \frac{c_{stab}}{2}\mathcal{J}_{t}^2 $. Since the derivatives of $ \mathcal{J}_{t} $ are given by  \eqref{eq:2}, the stationary points of $ \bar{J} $ are:
\begin{equation*}
\begin{cases}
\frac{\partial \bar{J}}{\partial \bm{V}} = \left(1 + c_{stab}\mathcal{J}_{t} \right)\left( - \sum_{i=1}^{N}\bm{\phi}_{1}(\bm{x}_{i})\bm{h}_{i}^\top + \eta_{1}\bm{V}\right) = 0                       , \\
\frac{\partial \bar{J}}{\partial \bm{U}} = \left(1 + c_{stab}\mathcal{J}_{t} \right)\left( - \sum_{i=1}^{N}\bm{\phi}_{2}(\bm{y}_{i})\bm{h}_{i}^\top + \eta_{2}\bm{U}\right) = 0                       , \\
\frac{\partial \bar{J}}{\partial \bm{h}_{i}} = \left(1 + c_{stab}\mathcal{J}_{t} \right)\left( -  \bm{V}^\top \bm{\phi}_{1}(\bm{x}_i) - \bm{U}^\top \bm{\phi}_{2}(\bm{y}_i) + \lambda \bm{h}_i \right) = 0,
\end{cases}
\end{equation*}
which gives the following solution:
\begin{equation*}
\begin{cases}
 {\bm{V} = \frac{1}{\eta_{1}} \sum_{i=1}^{N}\bm{\phi}_{1}(\bm{x}_{i})\bm{h}_{i}^\top }, \\
{\bm{U} = \frac{1}{\eta_{2}} \sum_{i=1}^{N}\bm{\phi}_{2}(\bm{y}_{i})\bm{h}_{i}^\top }, \\
\lambda \bm{h}_i  =  \bm{V}^\top \bm{\phi}_{1}(\bm{x}_i) + \bm{U}^\top \bm{\phi}_{2}(\bm{y}_i),
\end{cases}
\end{equation*}
assuming $ 1 + c_{stab}\mathcal{J}_{t} \neq 0 $. Elimination of $ \bm{V} ~ \text{and} ~ \bm{U} $ yields the following eigenvalue problem ${\left[ \frac{1}{\eta_{1}}\bm{K}_{1} + \frac{1}{\eta_{2}}\bm{K}_{2} \right] \bm{H}^\top=\bm{H}^\top \bm{\Lambda}} $, which is indeed the same solution for $ c_{stab}=0 $ in \eqref{eq:obj_train} and \eqref{eq:sup_KPCA}.

\section{Centering of kernel matrix \label{sec:centering}}

Centering of the kernel matrix is done by the following equation:
\begin{equation}\label{eq:centering}
\bm{K}_{c}= \bm{K} - N^{-1}\bm{11}^{\top} \bm{K} - N^{-1}\bm{K} \bm{11}^{\top} + N^{-2}\bm{11}^{\top}\bm{K}\bm{11}^{\top},
\end{equation}
where $\bm 1$ denotes an $N$-dimensional vector of ones and $ \bm{K} $ is either $ \bm{K}_{1} $ or $ \bm{K}_{2} $.

\section{Architecture details}
\label{subsec: Architecture}

See Table \ref{Table:dataset} and \ref{tab:arch} for details on model architectures, datasets and hyperparameters used in this paper and double precision is used for training the Gen-RKM model. The PyTorch library in Python was used as the programming language with a 8GB NVIDIA QUADRO P4000 GPU.

\begin{table}[H]
	\caption{Datasets and hyperparameters used for the experiments. The bandwidth of the Gaussian kernel for generation corresponds to the bandwidth that gave the best performance determined by cross-validation on the MNIST classification problem.
	}
	\label{Table:dataset}
	\centering
	\begin{tabular}{lccccccccc}
		\toprule
		Dataset       & $N$    & $d$                       & $N_{\mathrm{subset}}$ & $s$  & $m$   & $\sigma$ & $n_r$ & $l$ \\ \midrule
		MNIST         & 60000  & $28 \times 28$            & 10000                  & 500  & 50    & 1.3      & 4     & 10  \\
		Fashion-MNIST & 60000  & $28 \times 28$            & 500                   & 100  & 5     & /        & /     & 10  \\
		CIFAR-10      & 60000  & $32 \times 32 \times 3$   & 500                   & 500  & 5     & /        & /     & 10  \\
		CelebA        & 202599 & $128 \times 128 \times 3$ & 3000                   & 15   & 5     & /        & /     & 20  \\
		Dsprites      & 737280 & $64 \times 64$            & 1024                  & 2/10 & 5     & /        & /     & /   \\
		Teapot        & 200000 & $64 \times 64 \times 3$            & 1000                  & 5/10 & 100 & /        & /     & /   \\
		Sketchy        & 75471 & $64 \times 64 \times 3$            & 1000                  & 30 & 100 & /        & /     & /   \\ \bottomrule
	\end{tabular}
\end{table}

\begin{table}[H]    \caption{Details of model architectures used in the paper. All convolutions and transposed-convolutions are with stride 2 and padding 1. Unless stated otherwise, the layers have Parametric-RELU ($\alpha = 0.2$) activation function, except the output layers of the pre-image maps which has sigmoid activation function. \label{tab:arch}}
	\centering
\begin{tabular}{lclll}
		\textbf{Dataset}          & \multicolumn{1}{c}{\textbf{Optimizer}}         &                    & \multicolumn{2}{c}{\textbf{Architecture}}                                       \\
		& \multicolumn{1}{c}{(Adam)} &                    & \multicolumn{1}{c}{$ \mathcal{X} $}       & \multicolumn{1}{c}{$ \mathcal{Y} $} \\
		\toprule
		\multirow{6}{*}{\makecell{MNIST/\\ fMNIST}}    & \multirow{6}{*}{1e-3}      & Input              & 28x28x1                                   & 10 (One-hot encoding)               \\
		&                            & Feature-map (fm)   & \pbox{20cm}{Conv 32x4x4;                                                        \\  Conv 64x4x4;\\ FC 128 (Linear)} &      FC 15, 20 (Linear)                 \\
		&                            & Pre-image map      & reverse of fm                             & reverse of fm                       \\
		&                            & Latent space dim.  & \multicolumn{2}{c}{$500/100$}                                                       \\
		\midrule
		\multirow{6}{*}{CIFAR-10} & \multirow{6}{*}{1e-3}      & Input              & 32x32x3                                   & 10 (One-hot encoding)               \\
		&                            & Feature-map (fm)   & \pbox{20cm}{Conv 64x4x4;                                                        \\  Conv 128x4x4;\\ FC 128 (Linear)}                                   &             FC 15, 20                \\
		&                            & Pre-image map      & reverse of fm                             & reverse of fm                       \\
		&                            & Latent space dim.  & \multicolumn{2}{c}{$500$}                                                       \\
		\midrule
		\multirow{6}{*}{CelebA}   & \multirow{6}{*}{1e-4}      & Input              & 64x64x3                                   & -                                   \\
		&                            & Feature-map (fm)   & \makecell[l]{Conv 32x4x4;                                                        \\ Conv 64x4x4;\\ Conv 128x4x4;\\Conv 256x4x4 ;\\ FC 128 (Linear)}                                         &            -                \\
		&                            & Pre-image map      & reverse of fm                             & -                                   \\
		&                            & Latent space dim.  & \multicolumn{2}{c}{$15$}                                                        \\
		\midrule
		\multirow{6}{*}{Dsprites} & \multirow{6}{*}{1e-4}      & Input              & 64x64x1                                   & -                                   \\
		&                            & Feature-map (fm)   & \pbox{20cm}{Conv 20x4x4;                                                        \\ Conv 40x4x4;\\ Conv 80x4x4;\\ FC 128 (Linear)}                                        &           -            \\
		&                            & Pre-image map      & reverse of fm                             & -                                   \\
		&                            & Latent space dim.  & \multicolumn{2}{c}{$2/10$}                                                      \\
		\midrule
		\multirow{6}{*}{Teapot}   & \multirow{6}{*}{1e-4}      & Input              & 64x64x3                                   & -                                   \\
		&                            & Feature-map (fm)   & \pbox{20cm}{Conv 30x4x4;                                                        \\ Conv 60x4x4;\\ Conv 90x4x4;\\ FC 128 (Linear)}                                        &           -            \\
		&                            & Pre-image map      & reverse of fm                             & -                                   \\
		&                            & Latent space dim.  & \multicolumn{2}{c}{$5/10$}                                                      \\
		\midrule
		\multirow{6}{*}{Sketchy}   & \multirow{6}{*}{1e-4}      & Input              & 64x64x3                                   & -                                   \\
		&                            & Feature-map (fm)   & \pbox{20cm}{Conv 40x4x4;                                                        \\ Conv 80x4x4;\\ Conv 160x4x4;\\ FC 128 (Linear)}                                        &           -            \\
		&                            & Pre-image map      & reverse of fm                             & -                                   \\
		&                            & Latent space dim.  & \multicolumn{2}{c}{$30$}                                                      \\
		\bottomrule
	\end{tabular}
\end{table}

\section{Bilinear Interpolation}\label{appen:bilin_interpo}
Given four vectors $\bm{h}_1 , \bm{h}_2 , \bm{h}_3 \text{ and } \bm{h}_4 $ (reconstructed images from these vectors are shown at the corners of Figures \ref{fig:mnist_2d}, \ref{fig:celeb_2d}),  the interpolated vector $\bm{h}^{\star}$ is given by:
\[
\bm{h}^{\star} = (1-\alpha)(1-\gamma)\bm{h}_1 + \alpha (1-\gamma)\bm{h}_2 + \gamma (1-\alpha)\bm{h}_3 + \gamma \alpha\bm{h}_4,
\]
with $\quad 0\leq \alpha, \gamma \leq 1.$
This $\bm{h}^{\star}$ is then used in step 8 of the generation procedure of Gen-RKM algorithm (see Algorithm \ref{algo}) to compute $\bm{x}^{\star}$.

\section{Visualizing the disentanglement metric}
In this section we show the Hinton plots to visualize the disentaglement scores as shown in Table \ref{Table:disang}. Following the conventions of \cite{eastwood2018a}, $\bm{z}$ represents the ground-truth data generating factors. Figures \ref{fig:hinton_dsp} and \ref{fig:hinton_tea} shows the Hinton plots on DSprites and Teapot datasets using Lasso and Random Forest regressors for various algorithms. Here the white square size indicates the magnitude of the \emph{relative importance} of the latent code $\bm{h}_{i}$ in predicting $\bm{z}_{i}$.

\begin{figure}[H]
	\begin{subfigure}{1\textwidth}
		\centering
		 \resizebox{\textwidth}{!}{
		\begin{tabular}{c c c c| c c c c}
			\multicolumn{4}{c|}{\textbf{Lasso}}                                       & \multicolumn{4}{c}{\textbf{Random Forest}}                                                                                                                                                                                                                                                                                                                                                                                                \\
			\multicolumn{1}{c}{Gen-RKM}                                               & \multicolumn{1}{c}{VAE}                                                   & \multicolumn{1}{c}{$\beta \text{-VAE}$}                                   & \multicolumn{1}{c}{InfoGAN}                                   &\multicolumn{1}{c}{Gen-RKM}                                                               & \multicolumn{1}{c}{VAE}                                                                   & \multicolumn{1}{c}{$\beta \text{-VAE}$}              & \multicolumn{1}{c}{InfoGAN}                   \\
			\includegraphics[height=5cm]{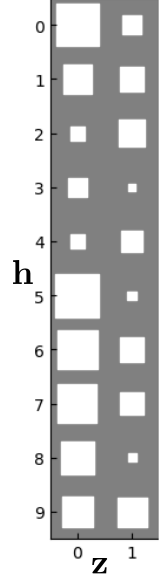} & \includegraphics[height=5cm]{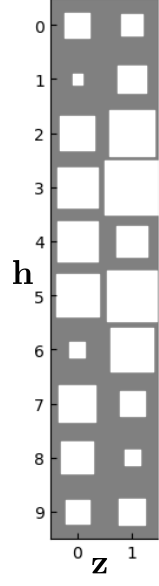} & \includegraphics[height=5cm]{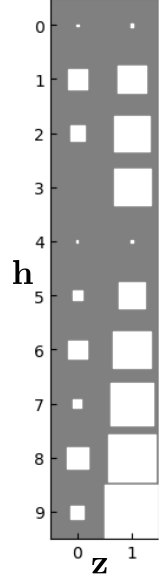} & \includegraphics[height=5cm]{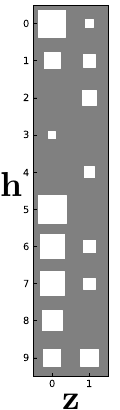} &\includegraphics[height=5cm]{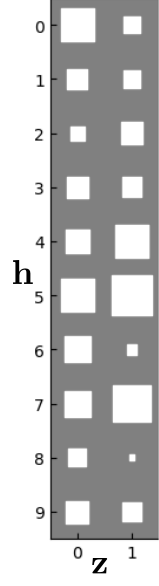} & \includegraphics[height=5cm]{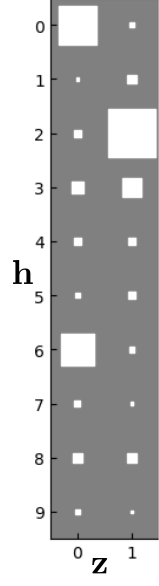} & \includegraphics[height=5cm]{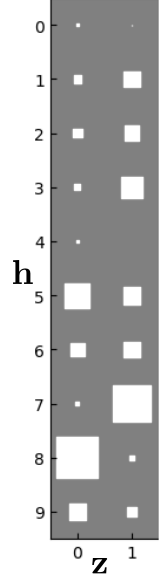} & \includegraphics[height=5cm]{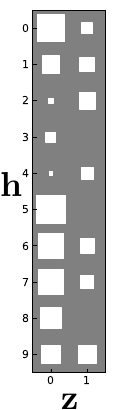}\\

		\end{tabular}
		}
		\caption{${h}_{dim}=10$}
	\end{subfigure}

	\begin{subfigure}{1\textwidth}
		\centering
		 \resizebox{\textwidth}{!}{
		\begin{tabular}{ c c c c| c c c c}
			\multicolumn{4}{c|}{\textbf{Lasso}}                                         & \multicolumn{4}{c}{\textbf{Random Forest}}                                                                                                                                                                                                                                                                                                                                                                                                          \\
			\multicolumn{1}{r}{Gen-RKM}                                                 & \multicolumn{1}{c}{VAE}                                                     & \multicolumn{1}{c}{$\beta \text{-VAE}$}   & \multicolumn{1}{c}{InfoGAN}                                        & \multicolumn{1}{r}{Gen-RKM}                                                                 & \multicolumn{1}{c}{VAE}                                                                     & \multicolumn{1}{c}{$\beta \text{-VAE}$}    & \multicolumn{1}{c}{InfoGAN}                                                      \\
			\includegraphics[height=1.2cm]{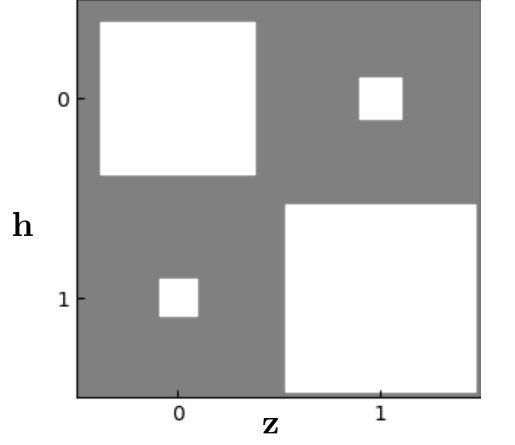} & \includegraphics[height=1.2cm]{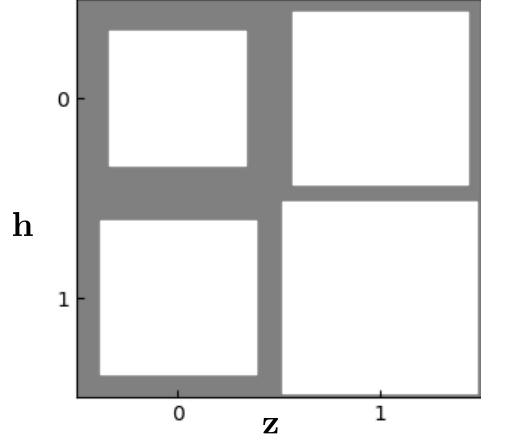} & \includegraphics[height=1.2cm]{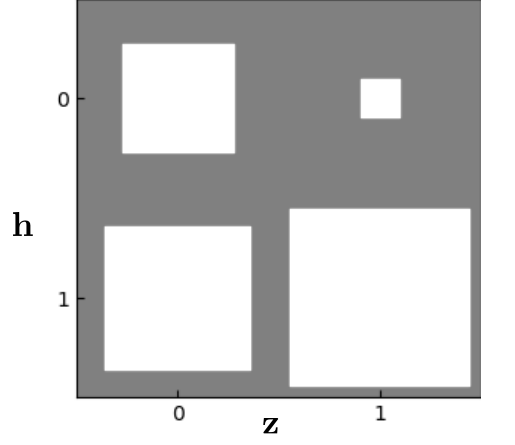} & \includegraphics[height=1.2cm]{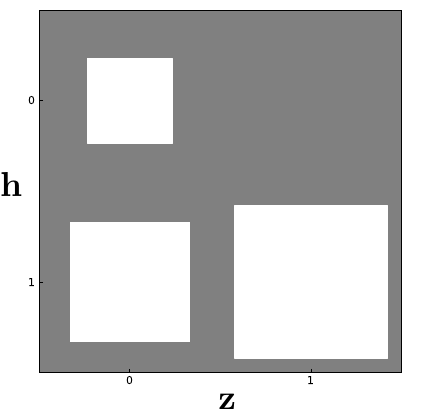} & \includegraphics[height=1.2cm]{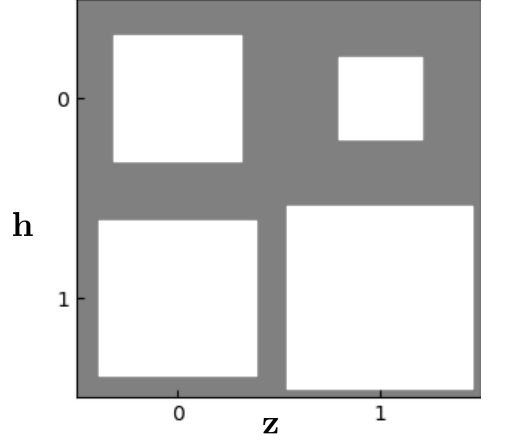} & \includegraphics[height=1.2cm]{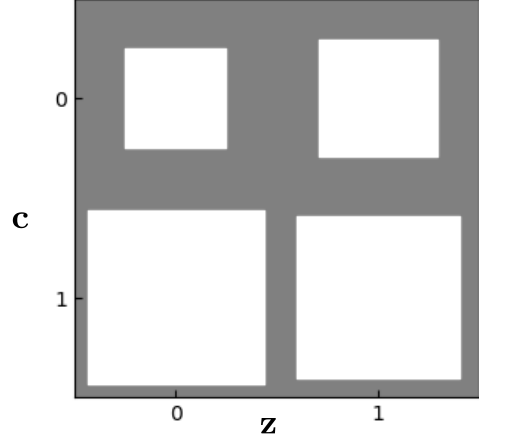} & \includegraphics[height=1.2cm]{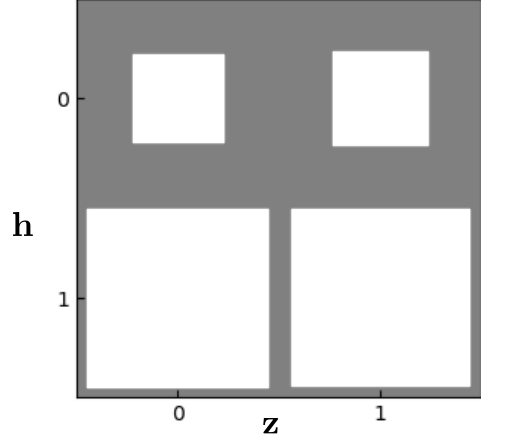} &\includegraphics[height=1.2cm]{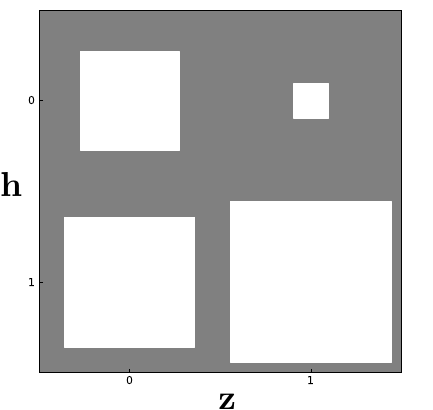}  \\
		\end{tabular}
		}
		\caption{${h}_{dim}=2$}
	\end{subfigure}
	\caption{Relative importance matrix as computed by Lasso and Random Forest regressors on DSprites dataset for ${h}_{dim}\in \left\{10,2 \right\} $ against the underlying data generating factors ${z}_{dim}= 2$ corresponding to $x,y$ positions of object.}
	\label{fig:hinton_dsp}
\end{figure}

\begin{figure}[H]
	\begin{subfigure}{1\textwidth}
		\centering
		 \resizebox{\textwidth}{!}{
		\begin{tabular}{m{1.8cm}  m{1.8cm}  m{1.8cm} m{2.2cm}| m{1.8cm}  m{1.8cm}  m{1.8cm} m{1.8cm}}
			\multicolumn{4}{c|}{\textbf{Lasso}}                                       & \multicolumn{4}{c}{\textbf{Random Forest}}                                                                                                                                                                                                                                                                                                                                                                                                \\
			\multicolumn{1}{r}{Gen-RKM}                                               & \multicolumn{1}{r}{VAE}                                                   & \multicolumn{1}{c}{$\beta \text{-VAE}$}       & \multicolumn{1}{c}{InfoGAN}                                            & \multicolumn{1}{r}{Gen-RKM}                                                               & \multicolumn{1}{r}{VAE}                                                                   & \multicolumn{1}{r}{$\beta \text{-VAE}$}       & \multicolumn{1}{c}{InfoGAN}                                                            \\
			\includegraphics[height=4cm]{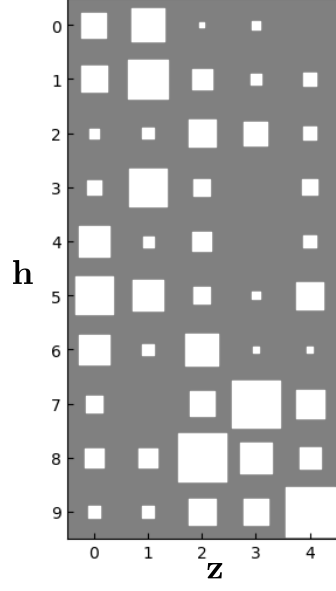} & \includegraphics[height=4cm]{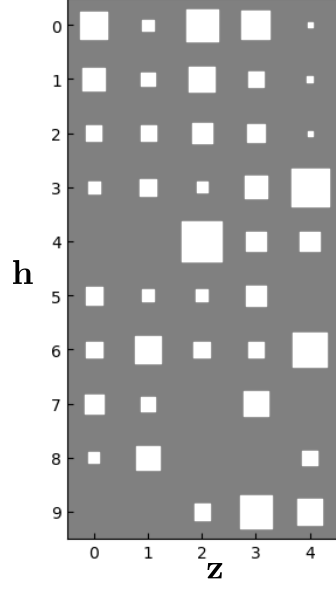} & \includegraphics[height=4cm]{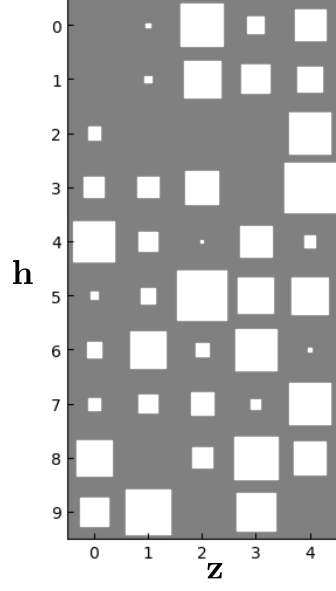} & \includegraphics[height=4cm]{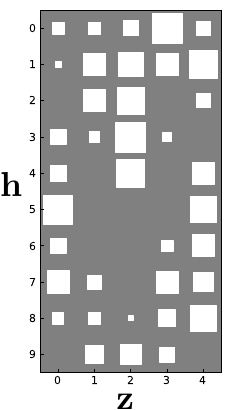} &
			\includegraphics[height=4cm]{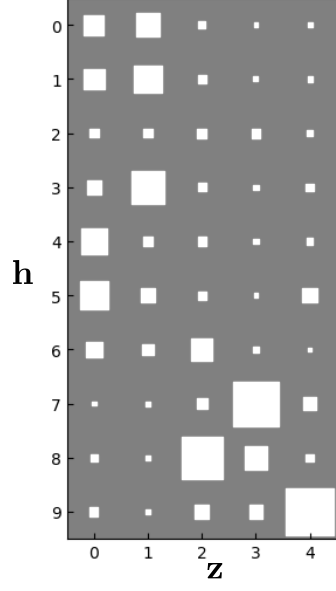} & \includegraphics[height=4cm]{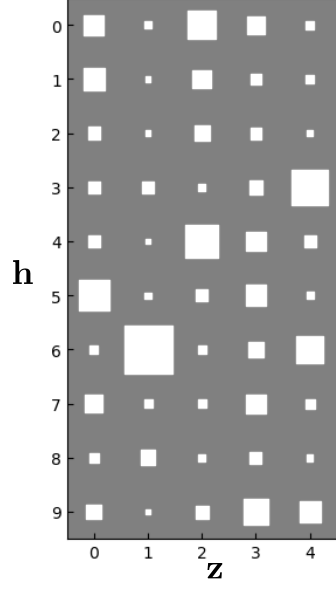} & \includegraphics[height=4cm]{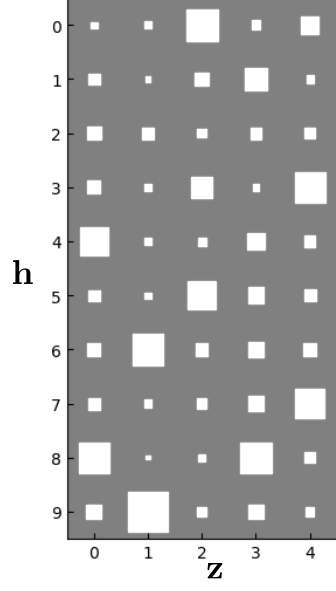}& \includegraphics[height=4cm]{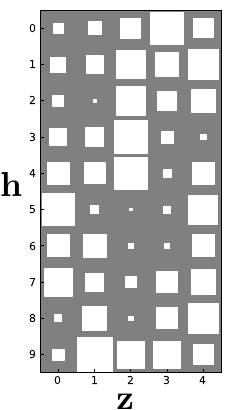}  \\
		\end{tabular}
		}
		\caption{${h}_{dim}=10$}
	\end{subfigure}
	\quad \\
	\begin{subfigure}{1\textwidth}
		\centering
		 \resizebox{\textwidth}{!}{
		\begin{tabular}{ m{1.8cm}  m{1.8cm} m{1.8cm} m{2.2cm}| m{1.8cm}  m{1.8cm} m{1.8cm} m{1.8cm}}
			\multicolumn{4}{c|}{\textbf{Lasso}}                                       & \multicolumn{4}{c}{\textbf{Random Forest}}                                                                                                                                                                                                                                                                                                                                                                                                \\
			\multicolumn{1}{r}{Gen-RKM}                                               & \multicolumn{1}{c}{VAE}                                                   & \multicolumn{1}{c}{$\beta \text{-VAE}$}     &   \multicolumn{1}{c}{InfoGAN}                                      & \multicolumn{1}{r}{Gen-RKM}                                                               & \multicolumn{1}{c}{VAE}                                                                   & \multicolumn{1}{r}{$\beta \text{-VAE}$}    &         \multicolumn{1}{c}{InfoGAN}                                                 \\
			\includegraphics[height=2cm]{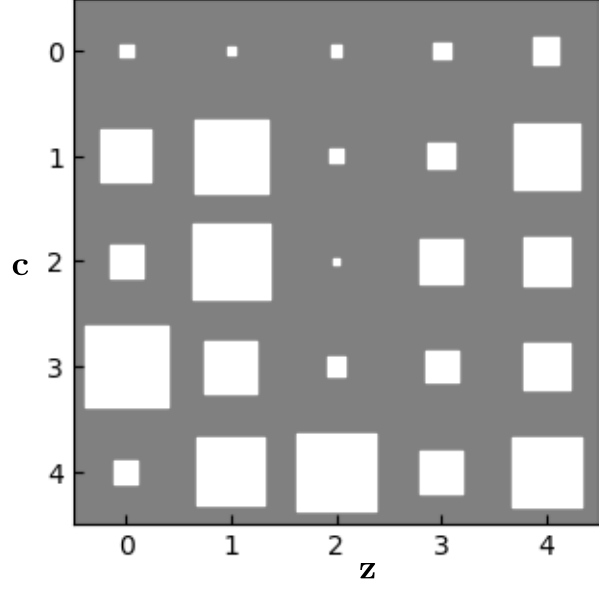} & \includegraphics[height=2cm]{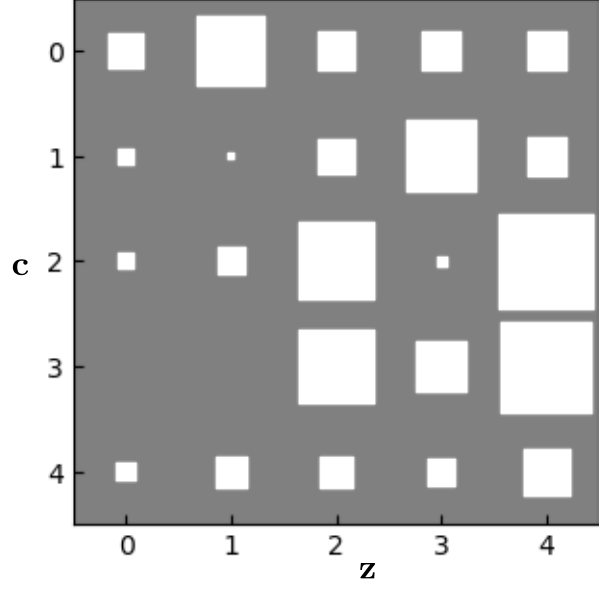} & \includegraphics[height=2cm]{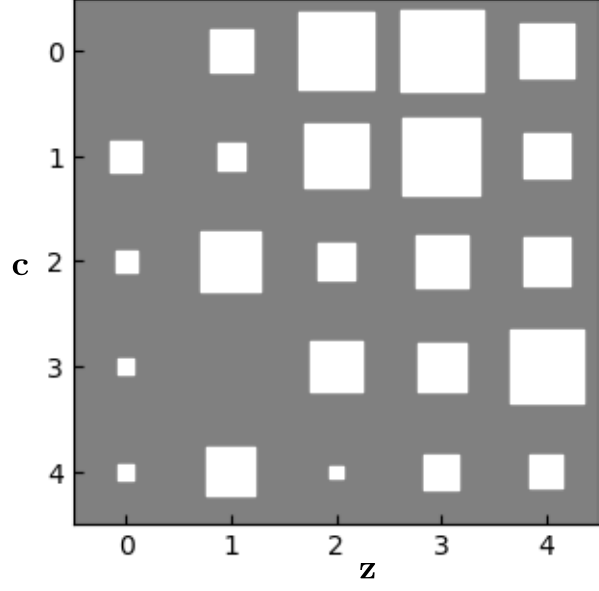} & \includegraphics[height=2cm]{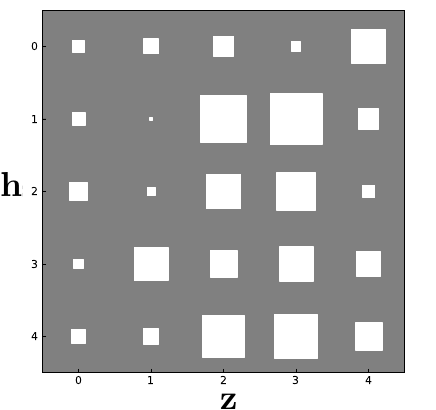} &  \includegraphics[height=2cm]{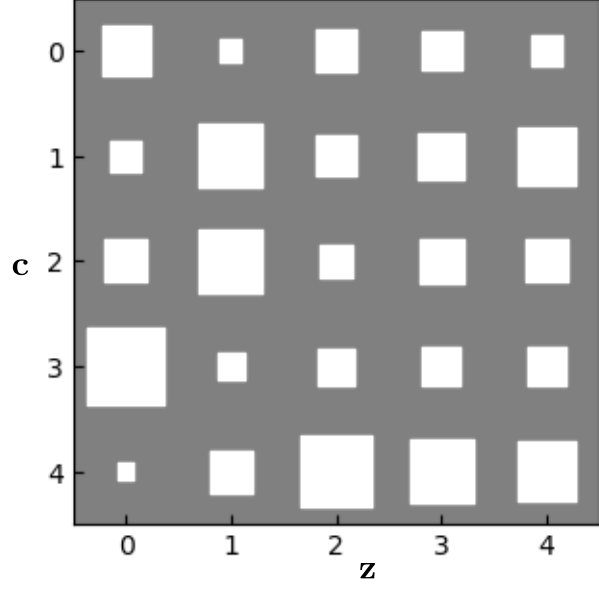} & \includegraphics[height=2cm]{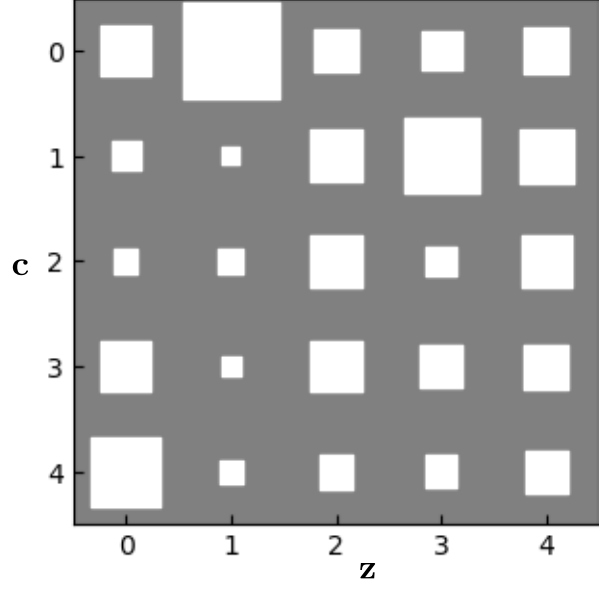} & \includegraphics[height=2cm]{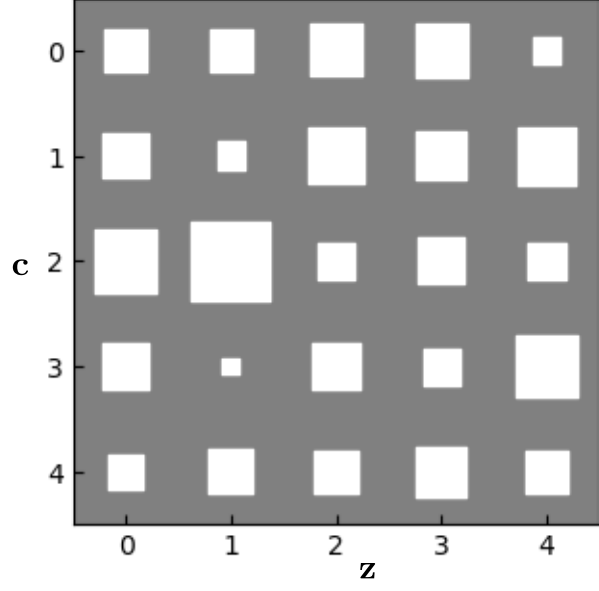} & \includegraphics[height=2cm]{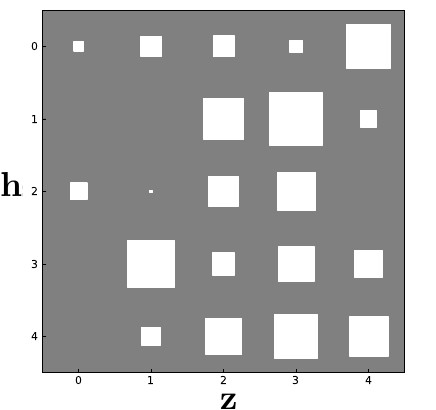}\\
		\end{tabular}
		}
		\caption{${h}_{dim}=5$}
	\end{subfigure}
	\caption{Relative importance matrix as computed by Lasso and Random Forest regaressors on Teapot dataset for ${h}_{dim}\in \left\{10,5 \right\} $ against the underlying data generating factors ${z}_{dim}= 5$ corresponding to azimuth, elevation and colors red, green and blue of the teapot object.}
	\label{fig:hinton_tea}
\end{figure}

\section{Further empirical results}\label{appen:further_results}

\begin{table}[H]
\caption{Training time per epoch comparisons (in seconds with standard deviation over 10 epochs) on MNIST and CelebA datasets. The architecture is the same as shown in Table \ref{tab:arch} below with mini-batch size 100 and batch-size 2000. In both the cases Info-GAN is the most computationally expensive due to the additional auxiliary network and two backward passes per iteration. $\beta$-TCVAE has the second worst computation times due to relatively more complicated ELBO objective. VAE is marginally better incase of MNIST whereas the Gen-RKM outperforms incase of CelebA. This could be due to significantly large number of parameters for CelebA architecture which increases the computational burden of VAE. However, due to the fixed computational cost of eigendecomposition (for fixed mini-batch size), the latent variables in Gen-RKM are computed with this fixed cost.}
\label{tab:train_times}
\centering
    \begin{tabular}{cccccc}
    \toprule
Dataset & Gen-RKM  & VAE & Info-GAN & $\beta$-TCVAE \\ \midrule
 MNIST   &    0.275 ($\pm$0.042)  & \textbf{0.223} ($\pm$0.013) & 0.372 ($\pm$0.044) & 0.318 ($\pm$0.031) \\
 CelebA  &  \textbf{4.274} ($\pm$0.147)   & 4.308 ($\pm$0.112) & 5.815 ($\pm$0.131)  & 5.201 ($\pm$0.152)  \\ \bottomrule
    \end{tabular}
\end{table}

\subsection{Illustration on toy example using a Gaussian kernel}

Here we demonstrate the application of Gen-RKM / Kernel PCA using a Gaussian kernel with $\sigma = 0.5$ on a 3 mode Gaussian dataset. The dataset is shown in Figure~\ref{fig:toy1} together with the first 5 Principal Components (PCs) of the latent space. The method looks for PCs that explain the most variance. One can see that moving along first component in latent space correspond to changing classes 3 $\rightarrow$ 2 $\rightarrow$ 1, whereas moving along the second component corresponds to changing classes 2 $\rightarrow$ 3 $\rightarrow$ 1. For PCs 3 to 5, the model shows disentanglement of the 3 classes, i.e. each Gaussian cluster is mapped to a specific component.
Moving along one of these components only changes the \emph{within class} variation. This behaviour is further confirmed by the experiment on Figure~\ref{fig:toy2}.
Here we visualize again the dataset where now the color corresponds to the value of the datapoint in latent space.

\begin{figure}[ht]
	\centering
	\begin{subfigure}[b]{0.32\textwidth}
		\centering
\includegraphics[height=3cm, width=1\textwidth]{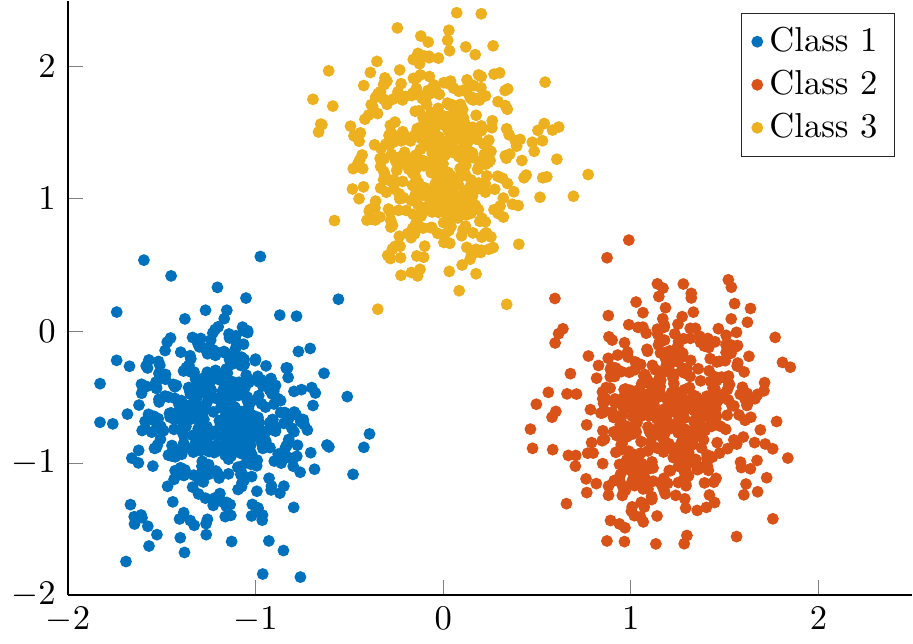}
\caption{Input data}
	\end{subfigure}
	\begin{subfigure}[b]{0.32\textwidth}
		\centering
\includegraphics[height=3cm, width=1\textwidth]{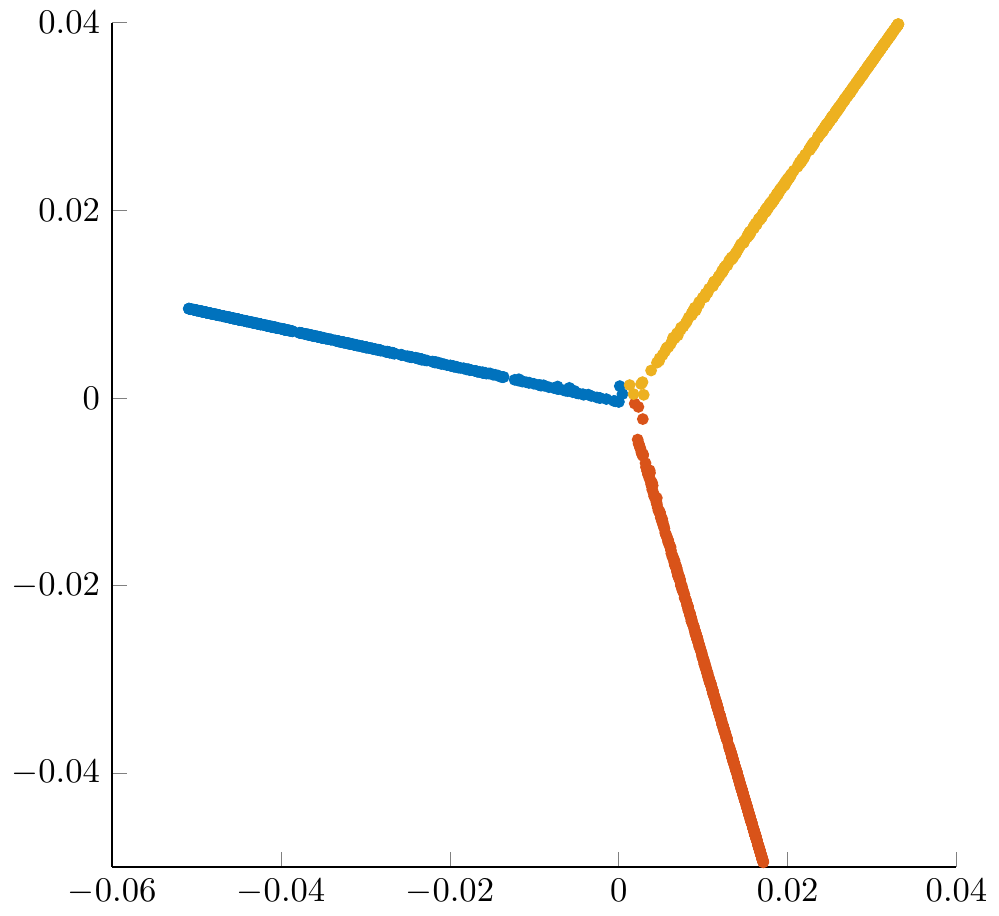}
\caption{PCs 1 \& 2}
	\end{subfigure}
	\begin{subfigure}[b]{0.32\textwidth}
		\centering
\includegraphics[height=3cm, width=1\textwidth]{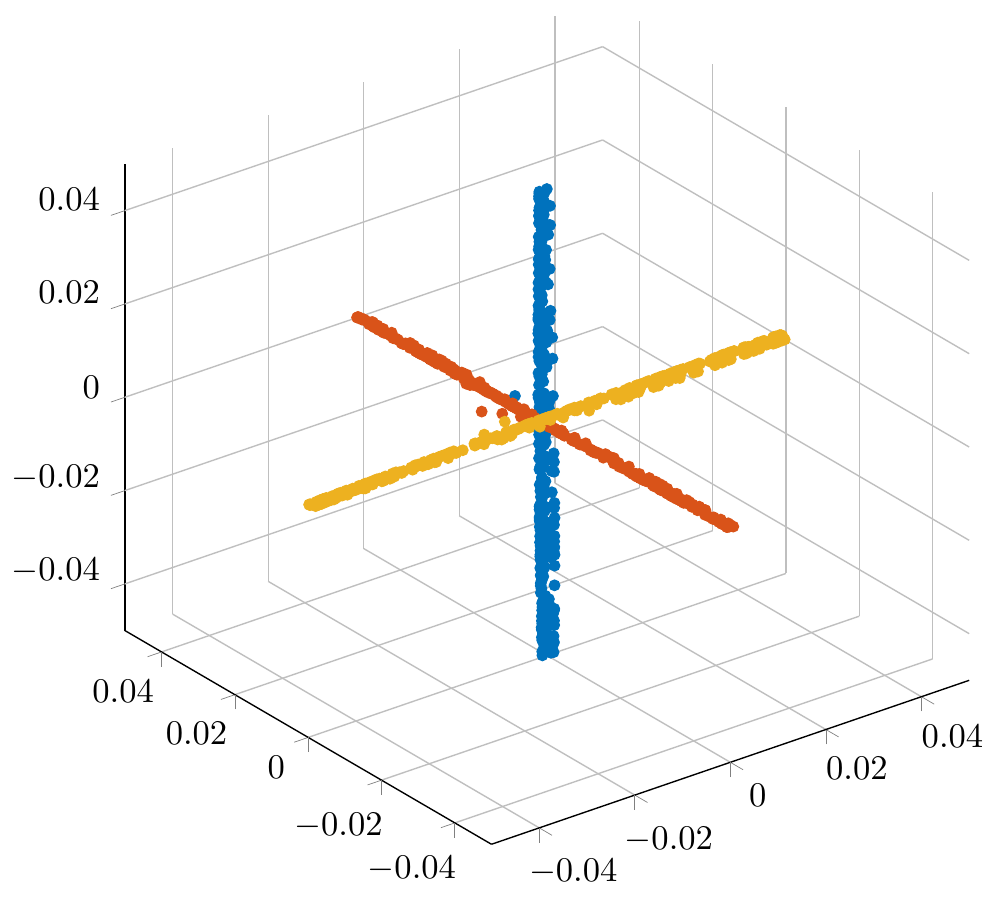}
\caption{PCs 3,4 \& 5}
	\end{subfigure}
		\caption{Visualization of the toy dataset together with the first 5 Principal Components (PCs) of the latent space of the Gen-RKM model. }\label{fig:toy1}
\end{figure}

\begin{figure}[ht]
	\centering
	\begin{subfigure}[b]{0.32\textwidth}
		\centering
		\includegraphics[width=1\textwidth]{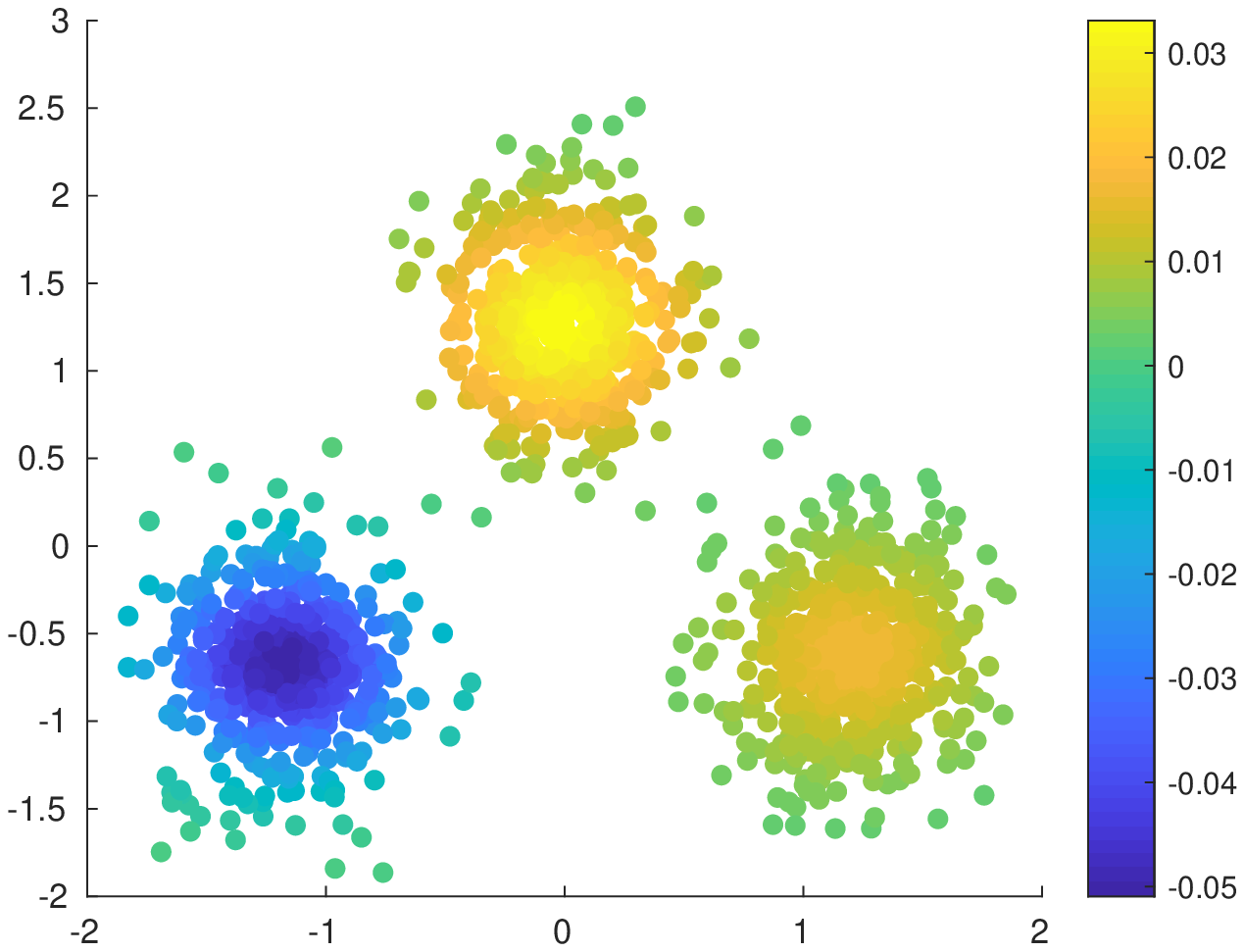}
		\caption{Latent dim. 1}
	\end{subfigure}
	\begin{subfigure}[b]{0.32\textwidth}
		\centering
		\includegraphics[width=1\textwidth]{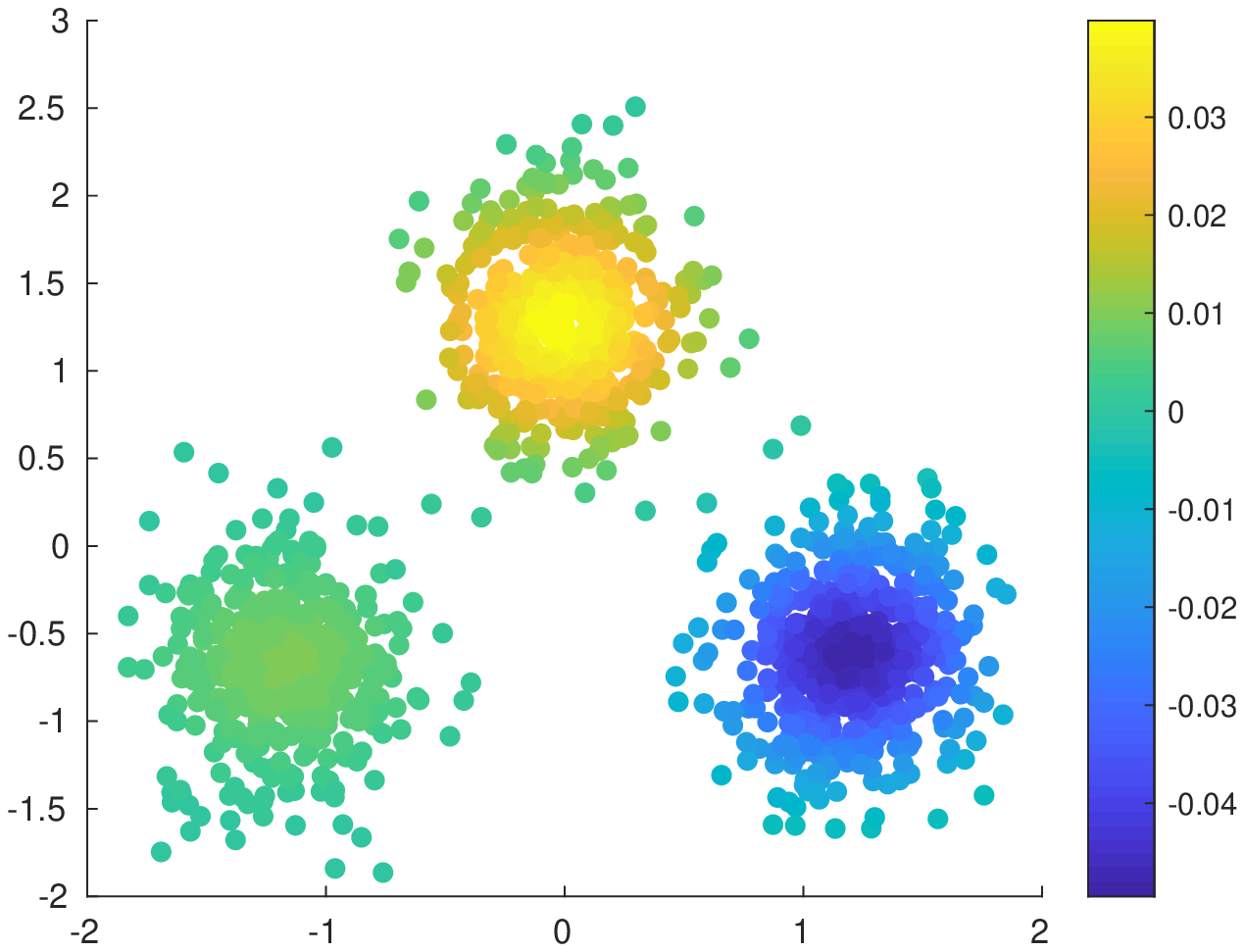}
		\caption{Latent dim. 2}
	\end{subfigure}
	\begin{subfigure}[b]{0.32\textwidth}
		\centering
		\includegraphics[width=1\textwidth]{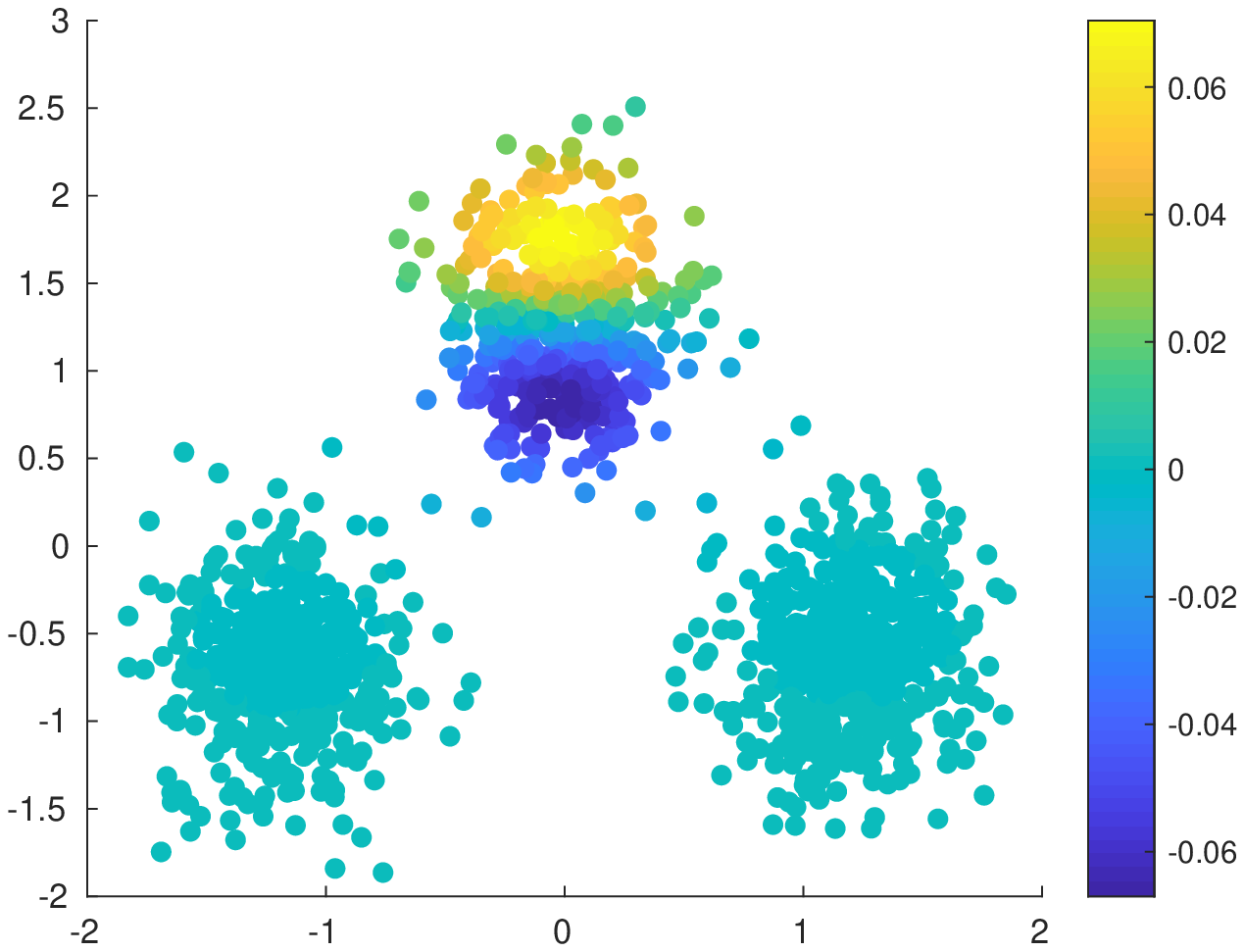}
		\caption{Latent dim. 3}
	\end{subfigure}
	\begin{subfigure}[b]{0.32\textwidth}
		\centering
		\includegraphics[width=1\textwidth]{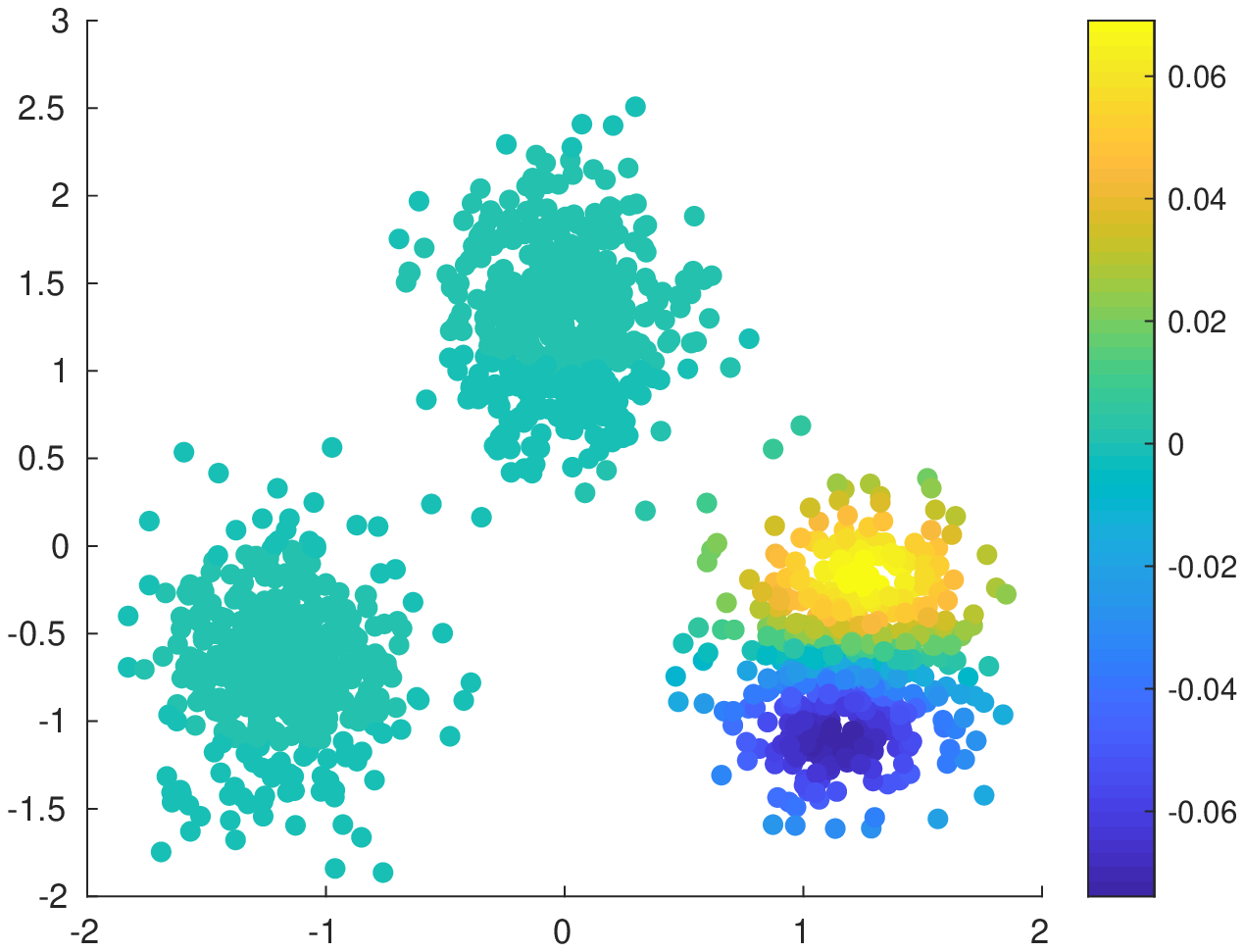}
		\caption{Latent dim. 4}
	\end{subfigure}
	\begin{subfigure}[b]{0.32\textwidth}
		\centering
		\includegraphics[width=1\textwidth]{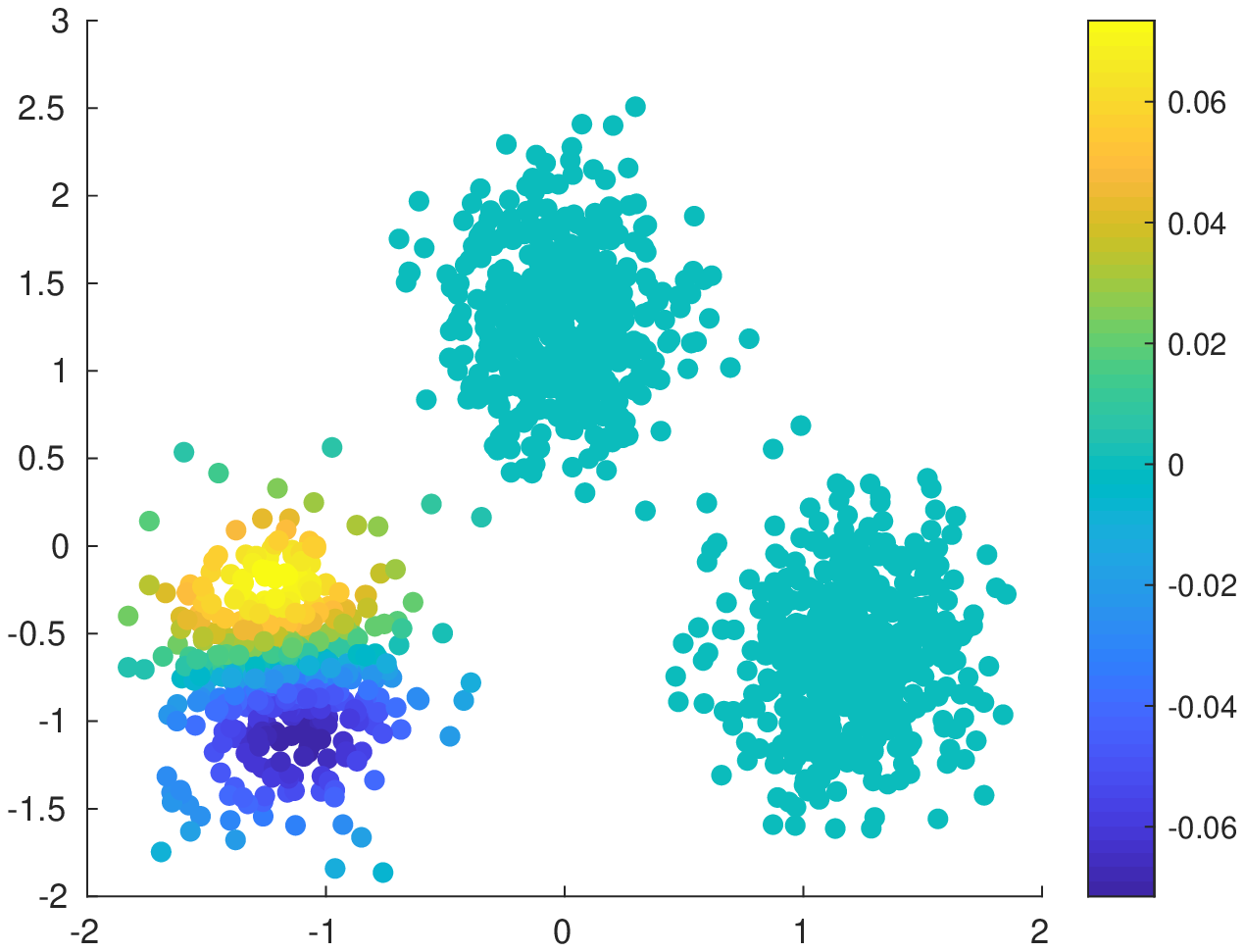}
		\caption{Latent dim. 5}
	\end{subfigure}
	\caption{Visualization of the traversals along the Principal components. Here the color corresponds to the value of the datapoint in latent space.}\label{fig:toy2}
\end{figure}

\begin{figure}[ht]
	\centering
	\begin{tabular}{c c c}
		& Gen-RKM                                             & VAE                                                 \\
		\rotatebox{90}{\qquad Reconstructions}    & \includegraphics[height=4.5cm]{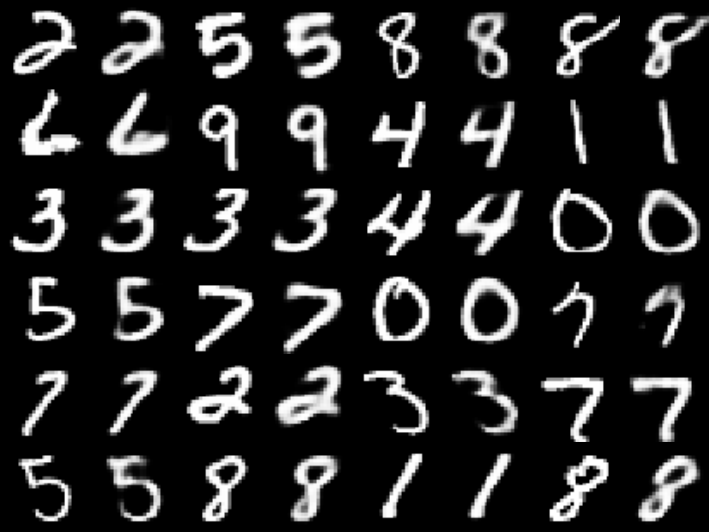} & \includegraphics[height=4.5cm]{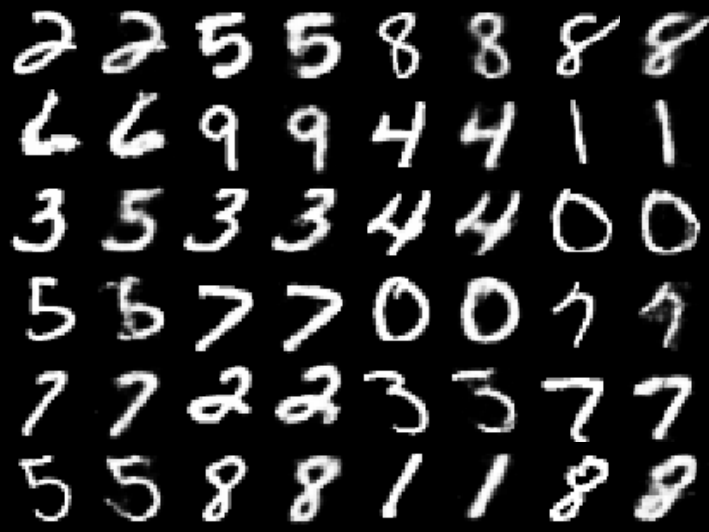} \\
		\rotatebox{90}{ Random Generation}  & \includegraphics[height=4.5cm]{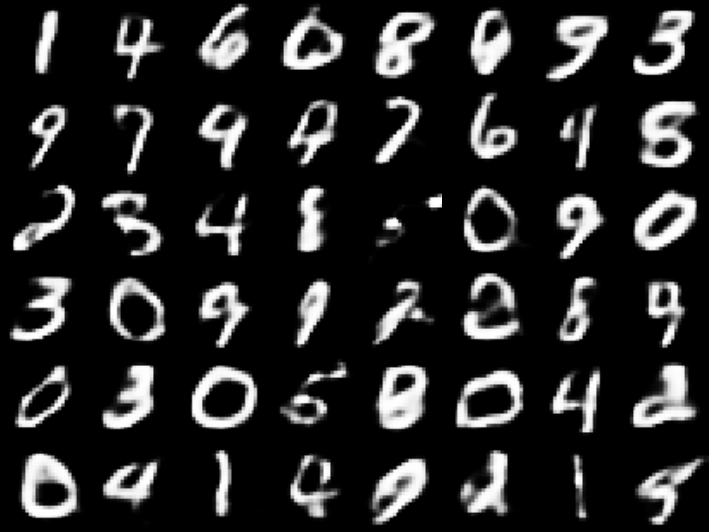}  & \includegraphics[height=4.5cm]{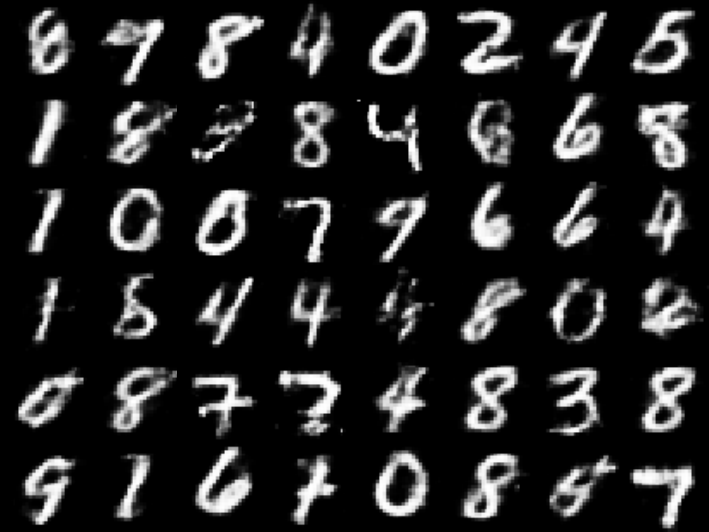}  \\
		\rotatebox{90}{\qquad Reconstructions}   & \includegraphics[height=4.5cm]{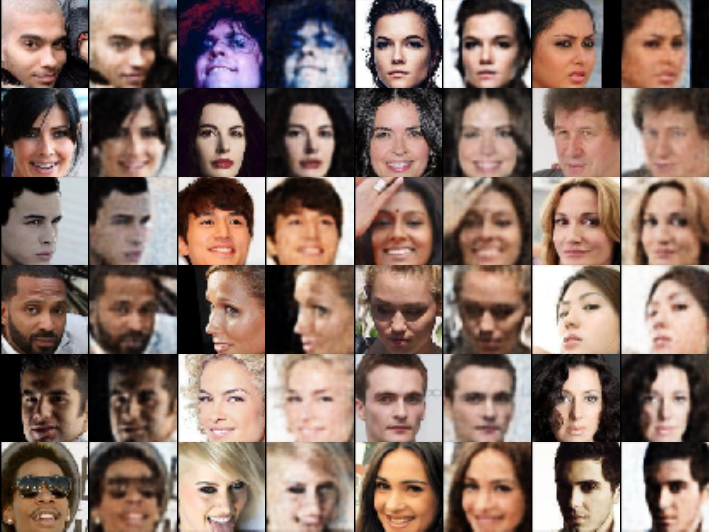} & \includegraphics[height=4.5cm]{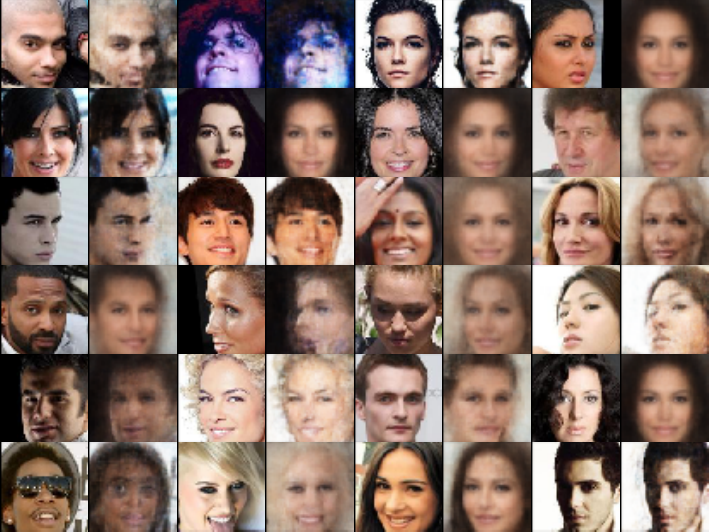} \\
		\rotatebox{90}{ Random Generation} & \includegraphics[height=4.5cm]{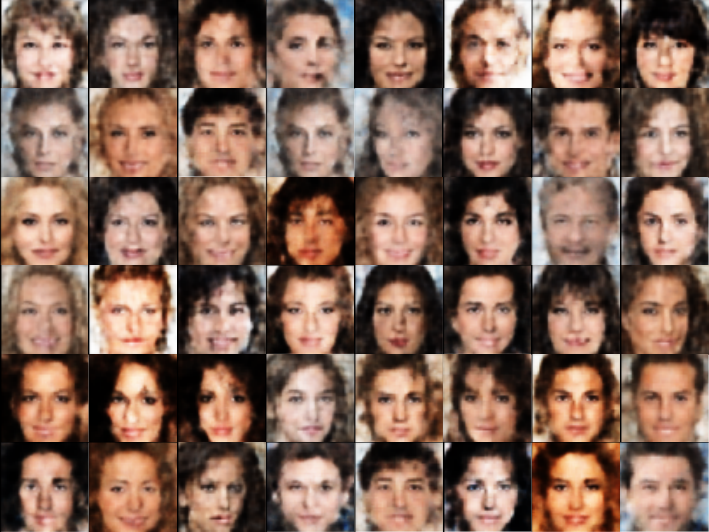} & \includegraphics[height=4.5cm]{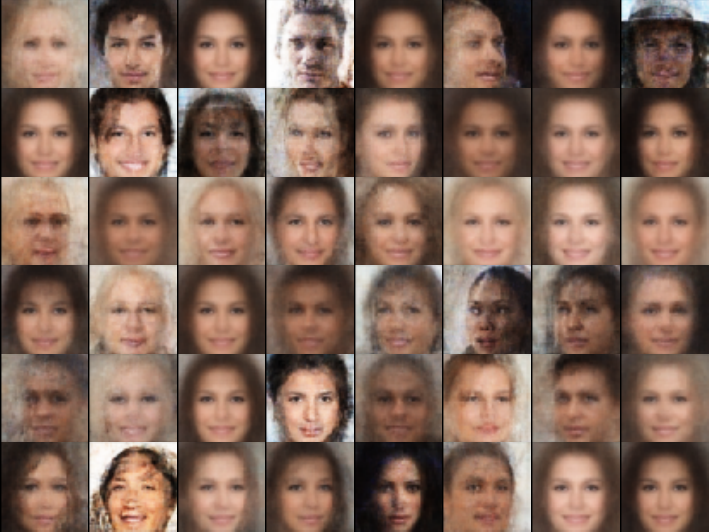}
	\end{tabular}
	\caption{Comparing Gen-RKM and standard VAE for reconstruction and generation quality. In reconstruction MNIST and reconstruction CelebA, uneven columns correspond to the original image, even columns to the reconstructed image.}
	\label{tab:ReconstructionComparison}
\end{figure}
\begin{figure}[ht]
\begin{subfigure}{1\textwidth}
\centering
\includegraphics[trim={0.5cm 0.5cm 0.5cm 0.5cm},clip, width=0.4\textwidth]{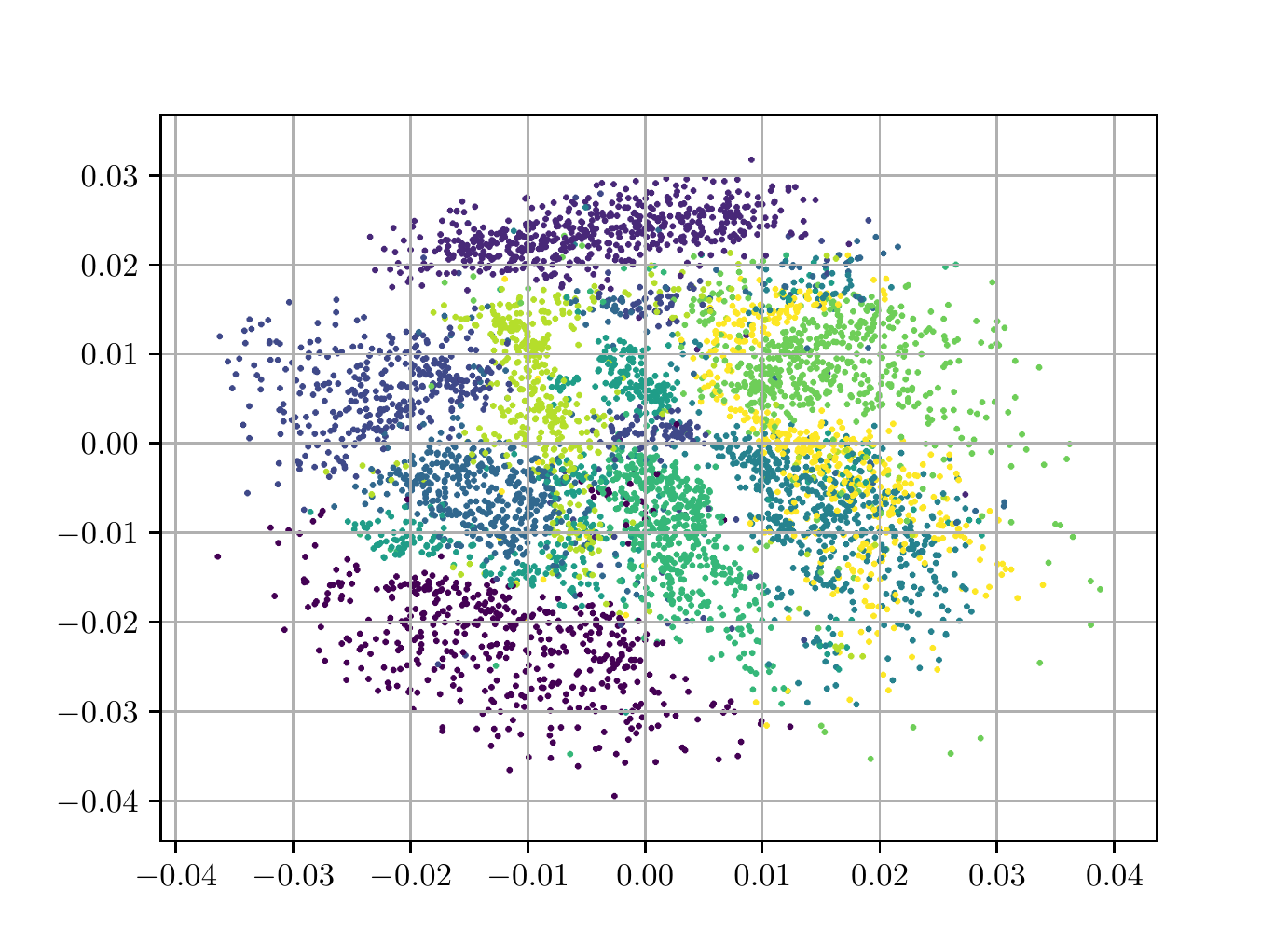}
\caption{Uniform grid $[-0.03,0.03]^{2}$ over latent space.}
\end{subfigure}

\begin{subfigure}{1\textwidth}
\centering
\includegraphics[trim={2cm 2cm 2cm 2cm},clip, width=0.9\textwidth]{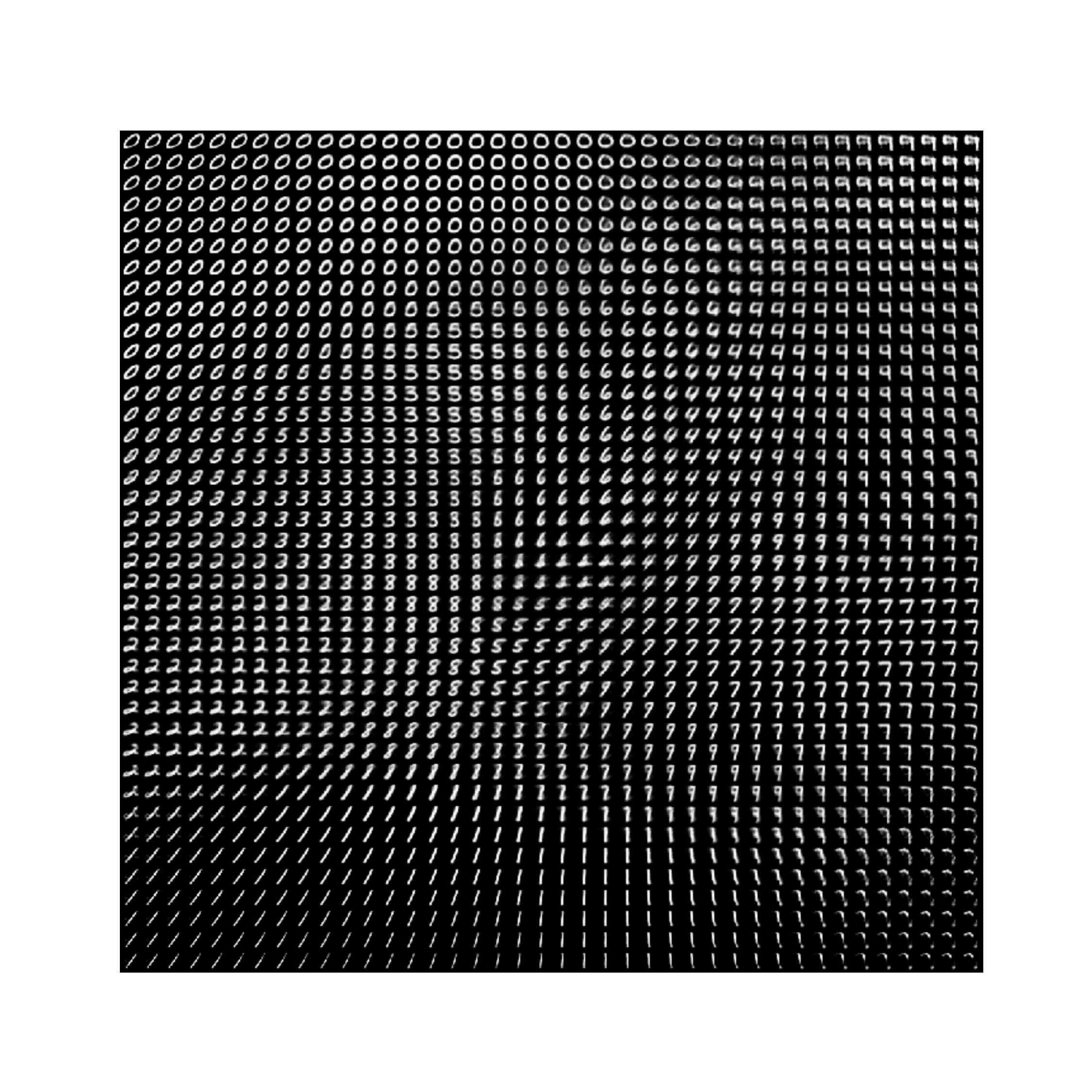}
\caption{Generated images from latent vectors obtained from the uniform grid in the latent space depicting smoothness of latent space.}
\end{subfigure}
\caption{MNIST: Latent space exploration.}
\label{fig:mnist_unif}
\end{figure}{}

\end{document}